\title{Learning Binary Latent Variable Models: A Tensor Eigenpair Approach}
\date{}
\author
{
	Ariel Jaffe$^1 \footnote{Email addresses:  \url{ariel.jaffe@weizmann.ac.il},  \url{roi.weiss@weizmann.ac.il}, \url{boaz.nadler@weizmann.ac.il}, \url{shai.carmi@huji.ac.il}, \url{yuval.kluger@yale.edu}}$, Roi Weiss$^1$, 
	Shai Carmi$^2$, Yuval Kluger$^{3,4,5}$ and Boaz Nadler$^1$
	\vspace{0.2cm} \\
	$^1${\footnotesize Dept. of Computer Science and Applied Mathematics, Weizmann Institute of Science, Rehovot 7610001, Israel }\\
	$^2${\footnotesize Braun School of Public Health and Community Medicine, The Hebrew University of Jerusalem, Jerusalem 9112102, Israel}\\
	$^3${\footnotesize Interdepartmental Program in Computational Biology and Bioinformatics, Yale University, New Haven, CT 06511, USA}\\
	$^4${\footnotesize Dept. of Pathology and Yale Cancer Center, Yale University School of Medicine, New Haven, CT 06520, USA}\\
	$^5${\footnotesize Program of Applied Mathematics, Yale University , New Haven, CT 06511, USA}
}
\newcommand{\xv}{\bm x}
\newcommand{\bv}{\bm b}
\newcommand{\fv}{\bm x}
\newcommand{\tv}{\bm t}
\newcommand{\hv}{\bm h}
\newcommand{\vv}{\bm{v}}
\newcommand{\yv}{\bm y}
\newcommand{\wv}{\bm w}
\newcommand{\R}{\mathbb{R}}
\newcommand{\T}{\mathcal T}
\newcommand{\E}{\mathbb E}
\newcommand{\nrm}[1]{{\|#1\|}}
\newcommand{\eps}{\varepsilon}
\newcommand{\g}{\gamma}
\newcommand{\gv}{\bm g}
\newcommand{\rank}{\operatorname{rank}}
\newcommand{\diag}{\operatorname{diag}}
\newcommand{\noise}{\xi}
\newcommand{\noisev}{\bm \noise}
\newcommand{\vast}{\vv^*}
\newcommand{\uv}{\bm u}
\newcommand{\hvv}{\hat{\vv}}
\newcommand{\ffo}{\bm\mu}
\newcommand{\effo}{\hat\ffo}
\newcommand{\fso}{M}
\newcommand{\efso}{\hat\fso}
\newcommand{\hfo}{\bm p}
\newcommand{\hso}{C}
\newcommand{\M}{\mathcal{M}}
\newcommand{\fto}{\M}
\newcommand{\efto}{\hat{\fto}}
\newcommand{\hto}{\mathcal C}
\newcommand{\wto}{\mathcal{W}}
\newcommand{\twto}{\tilde{\mathcal{W}}}
\newcommand{\ewto}{\hat{\mathcal{W}}}
\newcommand{\wm}{K}
\renewcommand{\T}{\mathcal T}
\newcommand{\beq}{\begin{eqnarray*}}
	\newcommand{\eeq}{\end{eqnarray*}}
\newcommand{\beqn}{\begin{eqnarray}}
\newcommand{\eeqn}{\end{eqnarray}}
\newcommand{\Winv}{W^\dagger}
\newcommand{\Wemp}{\hat W}
\newcommand{\EVset}{V}
\newcommand{\Vsig}{\EVset_\sigma}
\newcommand{\eVsig}{\hat{V}_\sigma}
\newcommand{\Vsub}{\bar V}
\newcommand{\Vsubsig}{\bar \EVset_\sigma}
\newcommand{\Vast}{\Winv}
\newcommand{\Ksig}{\wm_\sigma}
\newcommand{\eKsig}{{\hat \wm}_\sigma}
\newcommand{\eUsig}{\hat{U}_\sigma}
\newcommand{\lamths}{\tau}
\newcommand{\colvec}[2]{({#1},{#2})}
\newcommand{\argmin}{\operatorname*{argmin}}
\newcommand{\spn}{\operatorname{span}}
\newcommand{\proj}{L}
\newcommand{\score}{\Delta}
\newcommand{\Uast}{U^*}
\newcommand{\sphere}{\mathbb S}
\newcommand{\samp}{X}
\newcommand{\wWinv}{Y^*}
\newcommand{\hess}{J_p}
\newcommand{\tuv}{\tilde{\uv}}
\newcommand{\tvv}{\tilde{\vv}}
\newcommand{\ttv}{\tilde{\tv}}
\newcommand{\twv}{\tilde{\wv}}
\newcommand{\tlambda}{\tilde{\lambda}}
\newcommand{\Com}{F}
\newcommand{\eCom}{\hat{\Com}}
\newcommand{\chr}{\mathbbm{1}}
\newcommand{\expbr}{r}
\newtheorem{theorem}{Theorem}
\newtheorem{lemma}{Lemma}
\newtheorem{proposition}{Proposition}
\newtheorem{definition}{Definition}
\newtheorem*{definition*}{Definition}
\newtheorem*{remark*}{Remark}
\begin{document}
\maketitle








\begin{abstract}
Latent variable models with hidden binary units appear in various applications.
Learning such models, in particular in the presence of noise, is a challenging computational problem. In this paper we propose a novel spectral approach to this problem, based on the eigenvectors of both the second order moment matrix and third order moment tensor of the observed data.
We prove that under mild non-degeneracy conditions, our method consistently estimates the model parameters at the optimal parametric rate.
Our 
tensor-based 
method generalizes previous orthogonal tensor decomposition approaches, where the hidden units were assumed to be either statistically independent or mutually exclusive.
We illustrate the consistency of our method on simulated data and  
demonstrate its usefulness in learning a common model for population
mixtures in genetics.
\end{abstract}

\section{Introduction}
\label{sec:introduction}
In this paper we propose a spectral method for learning the following binary latent variable model, shown in Figure \ref{Fig:graph}.
The hidden layer, $\hv = (h_1,\dots,h_d)$, consists of $d$ binary random variables with an unknown joint distribution $P_{\hv}:\{0,1\}^d\to[0,1]$.
The observed vector $\fv \in \R^m$ of $m\geq d$ features is modeled as
\begin{equation}
\fv = W^\top \hv + \sigma\noisev,
\label{eq:model}
\end{equation}
where $W \in \R^{d \times m}$ is an unknown {weight} matrix assumed to be full rank $d$.
Here, $\sigma\geq 0$ is the noise level and $\noisev$ 
is an additive noise vector independent of $\hv$, 
whose $m$ coordinates are all i.i.d. zero mean and unit variance random variables.
For simplicity we assume it is Gaussian, though our method can be modified to handle other noise distributions.

The model in \eqref{eq:model} appears, for example, in overlapping clustering \citep{banerjee2005model,baadel2016overlapping}, in various problems in bioinformatics \citep{segal2002decomposing,becker2011multifunctional,slawski2013matrix}, and in blind source separation \citep{van1997analytical}.
A special instance of model \eqref{eq:model} is the Gaussian-Bernoulli restricted Boltzmann machine (G-RBM) where the distribution $P_{\hv}$ is further assumed to have a parametric energy-based structure \citep{hinton2006reducing,cho2011improved,wang2012analysis}.
G-RBMs were used, e.g., in modeling human motion \citep{taylor2007modeling} and natural image patches \citep{melchior2017gaussian}.

Given $n$ i.i.d.\ samples $\fv_1,\dots, \fv_n$ from model \eqref{eq:model}, the goal is to estimate the weight matrix $W$.
A common approach for learning $W$ is by maximum likelihood.
As this function is non-convex, common optimization schemes include the EM algorithm and alternating least squares (ALS).
In addition, several works developed iterative methods specialized to G-RBMs \citep{hinton2010practical,cho2011improved}.
All these methods, however, often lack consistency guarantees and may not be well suited for large datasets due to their potential slow convergence.
This is not surprising, as learning $W$ under model \eqref{eq:model} is believed to be computationally hard; see for example \citet{mossel2005learning}.

Over the past years, several works considered variants and specific instances of model \eqref{eq:model} under additional assumptions on the distribution $P_{\hv}$ or on the weight matrix $W$.
For example, when $P_{\hv}$ is a product distribution, the learning problem becomes that of independent component analysis (ICA) with binary signals \citep{hyvarinen2004independent}.
In this case, several methods have been derived for estimating $W$ and under suitable non-degeneracy conditions were proven to be both computationally efficient and statistically consistent \citep{shalvi1993super,frieze1996learning,regalia2003monotonic,hyvarinen2004independent,anandkumar2014tensor,jain2014learning}.
Similarly, when the hidden units are mutually exclusive, namely $P_{\hv}$ has support $\hv\in\{\bm e_i\}_{i=1}^d$, the model is a Gaussian mixture (GMM) with $d$ spherical components with linearly independent means.
Efficient and consistent algorithms have been derived for this case as well \citep{moitra2010settling,anandkumar2012spectral,anandkumar2012method,hsu2013learning}.
Among those, most relevant to this work are orthogonal tensor decomposition methods \citep{anandkumar2014tensor}.
Interestingly, 
these methods can learn some additional latent models, with hidden units that are not necessarily binary, such as Dirichlet allocation and other correlated topic models \citep{arabshahi2017spectral}.

Learning 
$W$ given the observed data $\{\fv_j\}_{j=1}^n$ can also be viewed as a noisy \emph{matrix factorization} problem.
If $W$ is known to be non-negative, then various non-negative matrix factorization methods can be used.
Moreover, under appropriate conditions, some of these methods were proven to be  computationally efficient and consistent \citep{donoho2004does,arora2012computing}.
For general full rank $W$, the matrix factorization method in \citet{slawski2013matrix} (SHL) exactly recovers $W$ when $\sigma=0$
with a runtime exponential in $d$.
This method, however, can handle only low levels of noise and 
has no consistency guarantees when $\sigma>0$.

\paragraph{A tensor eigenpair approach}
In this paper we propose a novel spectral method for learning $W$ 
which is based on the eigenvectors of both the second order moment matrix and the third order moment tensor of the observed data.
We prove that our method is consistent under mild non-degeneracy conditions and
achieves the parametric rate $O_P(n^{-\frac{1}{2}})$ for any noise level $\sigma\geq 0$.

The non-degeneracy conditions we pose are significantly weaker than those required by 
previous tensor decomposition methods mentioned above. 
In particular, their assumptions and resulting methods can be viewed as specific cases of our more general approach. 

Similarly to the matrix factorization method in \citet{slawski2013matrix}, our algorithm has runtime linear in $n$, polynomial in $m$, and in general exponential in $d$.
With our current Matlab implementation, most of the runtime is spent on computing the eigenpairs of a $d\times d\times d$ tensor.
Practically, our method, implemented without any particular optimization, can learn a model with 12 hidden units in less than ten minutes on a standard PC.
Furthermore, the overall runtime can be significantly reduced, since the step of computing the tensor eigenpairs can be embarrassingly parallelized.

\begin{figure}[t]
\begin{center}
\resizebox{0.44\linewidth}{!}{
{\begin{tikzpicture}[
	scale=.6, -latex ,auto ,node distance =3 cm and 6cm,on grid, semithick,
				state/.style ={ circle ,top color =white , bottom color = white ,
					draw, black , text=black , minimum width =.7 cm, inner sep=0pt}]
				\node[state] (h1) at (1,3) {$h_1$};
				\node[state] (h2) at (2.6,3) {$h_2$};
				\node[state] (h3) at (5,3) {$h_d$};
				\node[state] (F1) at (0,0) {$x_1$};
				\node[state] (Fi) at (3,0) {$x_i$};		
				\node[state] (Fm) at (6,0) {$x_m$};		
				\path (h1) edge node [above =0.15 cm,left = 0.1cm] {$W_{11}$}(F1);
				\path (h2) edge node [above =0.15 cm,left = 0.15cm] {}(F1);
				\path (h3) edge node [above =0.15 cm,left = 0.35cm] {}(F1);
				\path (h1) edge node [above =0.15 cm,left = 0.15cm] {}(Fi);
				\path (h2) edge node [above =0.15 cm,left = 0.15cm] {}(Fi);
				\path (h3) edge node [above =0.15 cm,left = 0.15cm] {}(Fi);
				\path (h1) edge node [above =0.15 cm,left = 0.15cm] {}(Fm);
				\path (h2) edge node [above =0.15 cm,left = 0.15cm] {}(Fm);
				\path (h3) edge node [above =0.15 cm,right = 0.1cm] {$W_{dm}$}(Fm);
				\path (F1) -- node[auto=false]{\ldots} (Fi);
				\path (h2) -- node[auto=false]{\ldots} (h3);
				\path (Fi) -- node[auto=false]{\ldots} (Fm);
				\node at (0,4.5) {};
\end{tikzpicture}
}	
}
\caption{The binary latent variable model.}
\label{Fig:graph}
\end{center}
\end{figure}
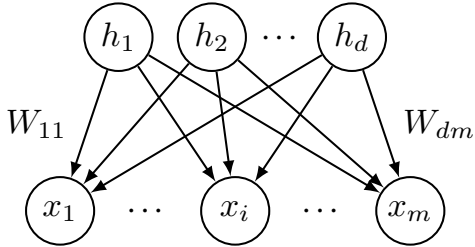

\paragraph{Paper outline} 
In the next section we fix the notation and provide necessary background on tensor eigenpairs.
In Section \ref{sec:method} we introduce our method in the case $\sigma=0$.
The case $\sigma\geq0$ is treated in Section \ref{sec:method_noise}.
Experiments with our method and comparison to other approaches appear in 
Section \ref{sec:experiments}.
All proofs are deferred to the appendices.

\section{Preliminaries}
\label{sec:tensor preliminaries}
\paragraph{Notation}
We abbreviate $[d]=\{1,\dots,d\}$ and denote $\bm e_i$ as the $i$-th unit vector with entries $(e_{i})_j=\delta_{ij}$.
We slightly abuse notation and view a matrix $W$ also as the set of its columns, namely $\wv\in W$ is some column of $W$ and $\spn(W)$ is the span of all its columns.
The unit sphere is denoted by $\sphere_{d-1} = \{\uv\in\R^d: \nrm{\uv}=1\}$.

A tensor $\mathcal T \in \R^{d \times d \times d}$ is symmetric if $\T_{ijk}= \T_{\pi(i,j,k)}$ for all permutations $\pi$ of $i,j,k$.
Here, we consider only symmetric tensors.
$\T$ can also be seen as a {\em multi-linear} operator:
for matrices $W^{1},W^{2},W^{3}$ with $W^{i}\in\R^{d \times d_i}$, the tensor-mode product, denoted $\T(W^{1},W^{2},W^{3})$, is a $d_1\times d_2\times d_3$ tensor whose $(i_1,i_2,i_3)$-th entry is
$$
\sum_{j_1,j_2,j_3\in[d]} 
W^{1}_{j_1 i_1} W^{2}_{j_2 i_2} W^{3}_{j_3 i_3} \T_{j_1 j_2 j_3}.
$$

\paragraph{Tensor eigenpairs}
Several types of eigenpairs of a tensor have been proposed.
Here, we consider the following definition, termed $Z$-eigenpairs by \citet{qi2005eigenvalues} and $l_2$-eigenpairs by \citet{lim2005singular}.
Henceforth we just call them eigenpairs.

\begin{definition}
\label{def:eig_def}
$(\uv,\lambda)\in\R^{d}\times\R$ is an eigenpair of $\T$ if
\beqn
\label{eq:fixed_point_def}
\T(I,\uv,\uv) = \lambda \uv \quad \text{and} \quad \|\uv\|=1.
\eeqn
\end{definition}

Note that if $(\uv,\lambda)$ is an eigenpair then the eigenvalue is simply $\lambda=\T(\uv,\uv,\uv)$.
In addition, $(-\uv,-\lambda)$ is also an eigenpair. Following common practice, we treat these two pairs as one.
So, without loss of generality, we make the convention that $\lambda\geq 0$.

In contrast to the matrix case, the number of eigenvalues $\{\lambda\}$ of a tensor $\T\in\R^{d\times d\times d}$ can be much larger than $d$.
As shown by \citet{cartwright2013number}, for a $d\times d\times d$ tensor, there can be at most $2^d-1$ of them.
With precise definitions appearing in \citet{cartwright2013number}, for a 
\emph{generic} tensor, all its eigenvalues have multiplicity one and the number of eigenpairs $\{(\uv,\lambda)\}$ is at most $2^d-1$.

In principle, computing the set of all eigenpairs of a general symmetric tensor is 
a \#P problem 
\citep{hillar2013most}.
Nevertheless, several methods have been proposed for computing at least {some} eigenpairs, including iterative higher-order power methods \citep{kolda2011shifted,kolda2014adaptive}, homotopy continuation \citep{chen2016computing}, semidefinite programming \citep{cui2014all}, and iterative Newton-based methods \citep{jaffe2017newton,guo2017modified}.
We conclude this section with the definition of 
\emph{Newton-stable} eigenpairs \citep{jaffe2017newton}
which are most relevant to our work.

\paragraph{Newton-stable eigenpairs}
Equivalently to \eqref{eq:fixed_point_def}, eigenpairs 
of $\T$ 
can also be characterized by the function
$\gv:\R^d\to\R^d$,
\begin{equation}
\label{eq:g_function}
\gv(\uv) = \T(I,\uv,\uv) - \T(\uv,\uv,\uv)\cdot\uv.
\end{equation}
It is easy to verify that a pair $(\uv,\lambda)$ with $\nrm{\uv}=1$ is an eigenpair of $\T$ if and only if $\gv(\uv)=\bm 0$ and $\lambda=\T(\uv,\uv,\uv)$.
The stability of an eigenpair is determined by its Jacobian matrix $ \nabla \gv(\uv)\in\R^{d\times d}$, more precisely, by its projection into the $d-1$ dimensional subspace orthogonal to $\uv$.
Formally, let $\proj_{\uv}\in \R^{d \times (d-1)}$ be a matrix with $d-1$ orthonormal columns that span the subspace orthogonal to $\uv$
and define the $(d-1)\times (d-1)$ projected Jacobian matrix
\beqn
\label{eq:proj_hessian}
\hess(\uv) = \proj_{\uv}^\top \nabla \gv(\uv) \proj_{\uv}.
\eeqn

\begin{definition}
\label{def:stable_eig}
An eigenpair $(\uv,\lambda)$ of $\T\in\R^{d\times d\times d}$ is {Newton-stable} if the matrix $\hess(\uv)$ has full rank $d-1$.
\end{definition}

The homotopy continuation method in \citet{chen2016computing} is guaranteed to compute all the Newton-stable eigenpairs of a tensor.
Alternatively, 
Newton-stable eigenpairs are attracting fixed points for the iterative orthogonal Newton correction method (O--NCM) in \citet{jaffe2017newton}. Moreover, O--NCM converges to any Newton-stable eigenpair at a quadratic rate given a sufficiently close initial guess.
Finally, for a generic tensor, all its eigenpairs are Newton-stable.

%
%

\section{Learning in the noiseless case}
\label{sec:method}
To motivate our approach for estimating the matrix $W$ it is instructive to first consider the ideal noiseless case where $\sigma=0$.
In this case, model \eqref{eq:model} takes the form $\fv=W^\top \hv$.
Our problem then becomes that of factorizing the observed matrix $\samp = [\fv_1,\dots, \fv_n] \in \R^{m \times n}$ of $n$ samples into a product of real and binary low-rank matrices,
\beqn
\text{Find }\,\,\,
W\in\R^{d\times m},
H\in\{0,1\}^{d\times n}
\,\,\,\,\text{s.t.}\,\,\,\,
\samp= W^\top H.
\label{eq:exact_problem}
\eeqn
To be able to recover $W$ we first need conditions under which the decomposition of $\samp$ into $W$ and $H$ is unique.
Clearly, such a factorization can be unique at most up to a permutation of its components; we henceforth ignore this degeneracy.
A sufficient condition for uniqueness, similar to the one posed in \citet{slawski2013matrix}, is that $H$ is \emph{rigid}. 
Formally, 
$H\in\{0,1\}^{d\times n}$ 
is {rigid} if any non-trivial linear combination of its rows yields a non-binary vector:
$\forall\uv\neq\bm 0$,
\beqn
\bm u^\top H \in\{0,1\}^{n}
\quad\Leftrightarrow\quad
\bm u \in 
\{\bm e_i\}_{i=1}^d 
.
\label{eq:rigid}
\eeqn
Condition \eqref{eq:rigid} is satisfied, for example, when the columns of $H$ include $\bm e_i$ and $\bm e_i + \bm e_j$ for all $i\neq j\in[d]$.

The following proposition, similar in nature to the (affine constrained) uniqueness guarantee in \citet{slawski2013matrix},
shows that under 
condition \eqref{eq:rigid} the factorization in \eqref{eq:exact_problem} is unique and fully characterized by the binary constraints.
\begin{proposition}
\label{lem:pseudo}
Let $\samp=W^\top H$ with $H\in\{0,1\}^{d\times n}$ rigid and $W\in\R^{d\times m}$ full rank with  $m\geq d$.
Let $\Winv\in\R^{m\times d}$ be the unique right pseudo-inverse of $W$ so $W\Winv  = I_d$.
Then $W$ and $H$ are unique and for all 
$\vv \in \spn(\samp)\setminus\{\bm 0\}$,
\beqn
\vv^\top \samp \in\{0,1\}^{n}
\quad\Leftrightarrow\quad
\vv \in \Winv 
.
\label{eq:rigid_F}
\eeqn
\end{proposition}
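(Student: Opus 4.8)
The plan is to reduce the statement about vectors $\vv$ in the column space of $\samp$ to the rigidity condition \eqref{eq:rigid} on $H$, via an explicit linear change of variables. First I would record two structural facts. Rigidity forces $H$ to have full row rank $d$: were some $\uv\neq\bm 0$ to satisfy $\uv^\top H=\bm 0$, then $\bm 0\in\{0,1\}^n$ together with \eqref{eq:rigid} would force $\uv=\bm e_i$, i.e.\ the $i$-th row of $H$ vanishes; but then $(2\bm e_i)^\top H=\bm 0$ is also binary while $2\bm e_i\notin\{\bm e_j\}_{j}$, contradicting \eqref{eq:rigid}. Since $W$ has full rank $d$ as well, $\rank(\samp)=d$ and $\spn(\samp)=\spn(W^\top)$, the $d$-dimensional column space of $W^\top$. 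Second, reading $\Winv$ as the Moore--Penrose pseudo-inverse $W^\top(WW^\top)^{-1}$, each column $\Winv\bm e_i=W^\top(WW^\top)^{-1}\bm e_i$ lies in $\spn(W^\top)=\spn(\samp)$, so the candidate vectors on the right-hand side of \eqref{eq:rigid_F} do belong to the admissible domain $\spn(\samp)\setminus\{\bm 0\}$.

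The heart of the argument is the bijection $\vv\mapsto\uv=W\vv$ between $\spn(\samp)$ and $\R^d$, with inverse $\uv\mapsto\Winv\uv$; here I would use that $\Winv W$ is the orthogonal projection onto $\spn(W^\top)$, which fixes every $\vv\in\spn(\samp)$, together with $W\Winv=I_d$. Under this map $\vv^\top\samp=\vv^\top W^\top H=(W\vv)^\top H=\uv^\top H$, so the condition $\vv^\top\samp\in\{0,1\}^n$ is \emph{equivalent} to $\uv^\top H\in\{0,1\}^n$. Since $\vv\neq\bm 0$ corresponds to $\uv\neq\bm 0$, rigidity \eqref{eq:rigid} applies verbatim and gives $\uv^\top H\in\{0,1\}^n\Leftrightarrow\uv\in\{\bm e_i\}_{i=1}^d$; translating back through $\vv=\Winv\uv$ yields $\vv^\top\samp\in\{0,1\}^n\Leftrightarrow\vv\in\{\Winv\bm e_i\}_{i=1}^d$, which is exactly \eqref{eq:rigid_F}. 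The easy inclusion is also visible directly, as $(\Winv\bm e_i)^\top\samp=\bm e_i^\top W W^\top H=\bm e_i^\top H$ is the $i$-th row of $H$, hence binary.

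Finally I would deduce uniqueness of $W$ and $H$ from the characterization itself. The set $\{\vv\in\spn(\samp)\setminus\{\bm 0\}:\vv^\top\samp\in\{0,1\}^n\}$ depends only on $\samp$, yet by \eqref{eq:rigid_F} it equals the set of columns of $\Winv$. Hence any two admissible factorizations $\samp=W^\top H=\tilde W^\top\tilde H$ must give the same collection of pseudo-inverse columns, so $\tilde\Winv=\Winv\Pi$ for a permutation matrix $\Pi$. Because $\Winv$ has full column rank, I can recover $W=(\Winv^\top\Winv)^{-1}\Winv^\top$ and then $H=\Winv^\top\samp$; both recoveries are permutation-equivariant, so $W$ and $H$ are determined up to the simultaneous relabeling of hidden units that was already set aside.

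I expect the two delicate points to be (i) pinning down which right inverse $\Winv$ denotes: the characterization fails for right inverses whose columns leave $\spn(\samp)$, so the Moore--Penrose choice is essential, and (ii) confirming that $\vv\mapsto W\vv$ is a genuine bijection between $\spn(\samp)$ and $\R^d$, which is precisely where full row rank of $H$ (hence $\spn(\samp)=\spn(W^\top)$) enters. Once these are in place, the rest is routine linear algebra.
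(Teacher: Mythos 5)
Your proof is correct and follows essentially the same route as the paper's: both directions hinge on the identity $\vv^\top\samp=(W\vv)^\top H$ together with rigidity \eqref{eq:rigid}, with $W\Winv=I_d$ giving the easy inclusion and $\spn(\samp)=\spn(W^\top)=\spn(\Winv)$ letting you translate $W\vv=\bm e_i$ back into $\vv$ being a column of $\Winv$. You additionally make explicit two details the paper leaves implicit (the full row rank of $H$ forced by rigidity, and the recovery of $W$ and $H$ from the column set of $\Winv$ for uniqueness), modulo a harmless slip where $(\Winv\bm e_i)^\top\samp$ should simplify via $(W\Winv)^\top=I_d$, not via $WW^\top$.
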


Hence, under the rigidity condition \eqref{eq:rigid}, the matrix factorization problem in \eqref{eq:exact_problem} is equivalent to the problem of finding the \emph{unique} set $\Vast=\{\vast_1,\dots,\vast_d\}\subseteq \spn(\samp)$ of $d$ non-zero vectors that satisfy the binary constraints ${\vast_i}^\top \samp\in\{0,1\}^n$.
The weight matrix is then $W = (\Winv)^\dagger$.

\paragraph{Algorithm outline} 
We recover $\Vast$ via a two step procedure.
First, a finite set $V = \{\vv_1,\vv_2,\dots\}\subseteq\spn(\samp)$ of \textit{candidate} vectors is computed with a guarantee that $\Vast\subseteq V$.
Specifically, $V$ is computed from the set of eigenpairs of a $d\times d\times d$ tensor, constructed from the low order moments of $\samp$.
Typically, the size of $V$ will be much larger than $d$, so in the second step $V$ is \textit{filtered} by selecting all $\vv \in V$ that satisfy
 ${\vv^\top} \samp \in \{0,1\}^n$.

Before describing the two steps in more detail we first state the additional non-degeneracy conditions we pose.
To this end, denote the unknown first, second, and third order moments of the latent binary vector $\hv$ by
\begin{salign}
\hfo &= \E[\hv] \in \R^d,
\\
\hso &= \E[\hv \otimes \hv]\in\R^{d\times d},
\\
\hto &= \E[\hv \otimes \hv \otimes \hv]\in\R^{d\times d\times d}.
\label{eq:latent_moments_3}
\end{salign}

\paragraph{Non-degeneracy conditions}
We assume the following:
\begin{itemize}
\item[\textit{(I)}] $H$ is rigid. 
\item[\textit{(II)}] 
$\rank(  2\hto(I,I,\bm e_i) - \hso)=d$
for all $i\in[d]$.
\end{itemize}

Condition \textit{(I)} implies 
$\rank(\hso)= \rank(H H^\top)=d$. This in turn implies 
$p_i=\E[h_i] > 0$ for all $i\in[d]$
and
that at most one variable $h_i$ has $p_i=1$. Such an ``always on'' variable can model a fixed bias to $\xv$.
As far as we know, condition \textit{(II)} is new and its nature will become clear shortly.

We now describe each step of our algorithm in more detail.

\paragraph{Computing the candidate set}
To compute a set $V$ that is guaranteed to include the columns of $\Winv$ we make use of the second and third order moments of $\fv$,
\begin{salign}
\fso &= \E[\fv \otimes \fv]\in\R^{m\times m},
\\
\fto &= \E[\fv \otimes \fv \otimes \fv]\in\R^{m\times m\times m}.
\label{eq:moments}
\end{salign}
Given a large number of samples $n\gg 1$, these can be easily and accurately estimated from the sample $\samp$.
For simplicity, in this section we consider the population setting where $n\to \infty$, so $\fso$ and $\fto$ are known exactly. 
$\fso$ and $\fto$ are related to the unknown second and third order moments of 
$\hv$ in \eqref{eq:latent_moments_3} via \citep{anandkumar2014tensor}
\beqn
\label{eq:moments_relations}
\fso = W^\top \hso W,
\qquad
\fto = \hto(W,W,W).
\eeqn
Since both $\hso$ and $W$ are full rank, the number of latent units can be deduced by $\rank(\fso)=d$.
Since $\hso$ is positive definite, there is a whitening matrix $\wm\in\R^{m \times d}$ such that 
\begin{equation}\label{eq:M_def}
\wm^\top \fso\wm = I_d.
\end{equation}
Such a $\wm$ can be computed, for example, by an eigen-decomposition of $M$.
Although $\wm$ is not unique, any $\wm\subseteq\spn(\fso)$ that satisfies \eqref{eq:M_def} suffices for our purpose.
Define the $d\times d\times d$ lower dimensional whitened tensor
\begin{equation}
\wto = \fto(\wm,\wm,\wm).
\label{eq:tensor_wight}
\end{equation}
Denote the set of
 \emph{eigenpairs} of $\wto$ by
\begin{equation}
\label{eq:eig_set}
U = \{(\uv,\lambda)
\in\sphere_{d-1}\times\R_+
: \wto(I,\uv,\uv) = \lambda \uv\}.
\end{equation}
Our set 
of candidates is then
\begin{equation}
\label{eq:subsets}
\EVset = 
\{K\uv/\lambda: (\uv,\lambda) \in U
\text{ with } \lambda \geq 1\}
\subseteq \R^m.
\end{equation}
The following lemma
shows that under 
condition \textit{(I)} 
the set 
$V$ 
is guaranteed to contain 
the $d$ columns of 
$W^\dagger$.
\begin{lemma} 
\label{lem:eig_to_W}
Let $\wto$ be the tensor in \eqref{eq:tensor_wight} corresponding to model \eqref{eq:model} with $\sigma=0$ and let $V$ be as
in \eqref{eq:subsets}.
If condition \textit{(I)} holds then $\Vast\subseteq V$.
In particular, 
each $(\uv_i,\lambda_i)$ in 
the set of $d$ relevant eigenpairs
\beqn
\label{eq:U_ast}
\Uast = \{ (\uv,\lambda)
 \in U
: \wm \uv/\lambda \in \Vast\}
\eeqn
has the eigenvalue $\lambda_i=1/\sqrt{p_i}\geq 1$
 where $p_i=\E[h_i]>0$.
\end{lemma}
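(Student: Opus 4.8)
The plan is to reduce the whole statement to the whitened latent tensor and then exploit the idempotence $h_i^2=h_i$ of the binary hidden units, which is what ultimately collapses the third-order eigenproblem into a second-order identity. First I would set $A = W\wm \in \R^{d\times d}$ and rewrite the whitened tensor purely in terms of the latent moments. Substituting $\fto=\hto(W,W,W)$ from \eqref{eq:moments_relations} into \eqref{eq:tensor_wight} and composing the multilinear maps gives $\wto=\hto(A,A,A)$. The whitening identity \eqref{eq:M_def} then reads $A^\top\hso A=I_d$; since condition \textit{(I)} forces $\rank(\hso)=d$ (hence $\fso$ has rank $d$ and $\wm$ is a genuine $d$-dimensional whitener), this makes $A$ invertible and yields the two identities $A^\top\hso=A^{-1}$ and $\hso=(AA^\top)^{-1}$, which I will use repeatedly.

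Next I would locate each target vector $\vast_i$ (the $i$-th column of $\Winv$) inside the whitened coordinates. Because $\spn(\fso)=\spn(W^\top)=\spn(\wm)$ and $\wm$ has full column rank, I can write $\vast_i=\wm\uv_i$ for a unique $\uv_i\in\R^d$. Applying $W$ and using $W\Winv=I_d$ gives $A\uv_i=W\vast_i=\bm e_i$, i.e.\ $\uv_i=A^{-1}\bm e_i$. This is the crucial bookkeeping step: the pseudo-inverse column becomes, in the whitened frame, precisely the vector that $A$ sends to the $i$-th coordinate axis.

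The heart of the argument is the eigenpair verification. I would first derive the contraction formula $\wto(I,\uv,\uv)=\E[(A^\top\hv)\,((A\uv)^\top\hv)^2]$ directly from $\wto=\hto(A,A,A)$ and the definition of $\hto$. Evaluating at $\uv=\uv_i$, the inner factor becomes $(A\uv_i)^\top\hv=\bm e_i^\top\hv=h_i$, and here the binary structure enters decisively: since $h_i^2=h_i$, the cubic contraction collapses to the second moment, $\wto(I,\uv_i,\uv_i)=\E[(A^\top\hv)h_i]=A^\top\hso\bm e_i=A^{-1}\bm e_i=\uv_i$. Normalizing $\tuv_i=\uv_i/\nrm{\uv_i}$ then shows $(\tuv_i,\lambda_i)$ is an eigenpair of $\wto$ with $\lambda_i=1/\nrm{\uv_i}$. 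Finally $\nrm{\uv_i}^2=\bm e_i^\top(AA^\top)^{-1}\bm e_i=\bm e_i^\top\hso\bm e_i=\E[h_i^2]=p_i$, so $\lambda_i=1/\sqrt{p_i}$; since $p_i\in(0,1]$ (positivity again from condition \textit{(I)}) we get $\lambda_i\geq1$, and the corresponding candidate $\wm\tuv_i/\lambda_i=\wm\uv_i=\vast_i$ lies in $V$, identifying $(\tuv_i,\lambda_i)$ as a member of $\Uast$. This establishes $\Vast\subseteq V$ together with the claimed eigenvalues.

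I expect the main obstacle to be not conceptual but a matter of getting the multilinear algebra exactly right, in particular deriving the contraction identity for $\wto(I,\uv,\uv)$ and carefully propagating the whitening identities $A^\top\hso=A^{-1}$ and $\hso=(AA^\top)^{-1}$; once these are in place, the idempotence $h_i^2=h_i$ does the essential work. I would also take care to record explicitly where condition \textit{(I)} is used, namely the full rank of $\hso$ (for invertibility of $A$ and validity of whitening) and the strict positivity of $p_i$ (so that $\lambda_i=1/\sqrt{p_i}$ is finite and $\geq1$), since that is the only non-degeneracy assumption the statement invokes.
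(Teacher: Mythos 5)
Your proposal is correct and follows essentially the same route as the paper: both arguments reduce to the whitened frame via $A=W\wm$ (the paper's $\wWinv$ is exactly your $A^{-1}$), use the binary idempotence $h_i^2=h_i$ (the paper's identity $\hto_{iij}=\hso_{ij}$) together with the whitening relation $A^\top\hso A=I_d$ to establish the fixed-point equation $\wto(I,\uv_i,\uv_i)=\uv_i$, and then normalize. The only cosmetic differences are that you verify this fixed-point identity directly column by column, whereas the paper packages it as the auxiliary Lemma \ref{lem:part_1} applied to the matrix constraint \eqref{eq:T_tilde_X}, and you obtain $\lambda_i=1/\sqrt{p_i}$ from $\nrm{\uv_i}^2=\hso_{ii}=p_i$ rather than from the equivalent computation $\wto(\yv,\yv,\yv)=\E[h_i]=1/\lambda^2$.
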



\paragraph{Computing the tensor eigenpairs}
By Lemma \ref{lem:eig_to_W}, we may construct a candidate set $V$ that contains $\Vast$ by first calculating the set $U$ of eigenpairs of $\wto$.
Unfortunately, computing the set of {all} eigenpairs of a general symmetric tensor is computationally hard \citep{hillar2013most}.
Moreover, besides the $d$ columns of $\Vast$, the set $\EVset$ in \eqref{eq:subsets} may contain many spurious candidates, as the number of eigenpairs of $\wto$ is typically $O(2^d)$ which is much larger than $d$ \citep{cartwright2013number}.

Nevertheless, as discussed in Section \ref{sec:tensor preliminaries}, several methods have been proposed for computing \emph{some} eigenpairs of a tensor under appropriate stability conditions.
The following lemma highlights the importance of condition \textit{(II)} for the stability of the eigenpairs in $\Uast$.
Note that conditions \textit{(I)-(II)} do not depend on $W$, but only on the distribution of the latent variables $\hv$.


\begin{lemma} 
\label{lem:eig_stability}
Let $\wto$ be the whitened tensor in \eqref{eq:tensor_wight} corresponding to model \eqref{eq:model} with $\sigma=0$.
If conditions \textit{(I)-(II)} hold, then all $(\uv,\lambda) \in \Uast$ are Newton-stable eigenpairs of $\wto$.
\end{lemma}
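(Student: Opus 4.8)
The plan is to compute the projected Jacobian $\hess(\uv)$ explicitly at an eigenpair of $\wto$, reduce its full-rank condition to the invertibility of a single symmetric $d\times d$ matrix, and then translate that matrix back into the latent moments $\hto$ and $\hso$, where condition \textit{(II)} takes over.

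First I would differentiate $\gv$ in \eqref{eq:g_function}. For a symmetric tensor one obtains $\nabla\gv(\uv) = 2\wto(I,I,\uv) - 3\,\uv\,\wto(I,\uv,\uv)^\top - \wto(\uv,\uv,\uv)\,I$. Evaluating at an eigenpair $(\uv,\lambda)\in\Uast$, where $\wto(I,\uv,\uv)=\lambda\uv$ and $\wto(\uv,\uv,\uv)=\lambda$, this simplifies to $\nabla\gv(\uv) = 2\wto(I,I,\uv) - 3\lambda\,\uv\uv^\top - \lambda I$. Since the columns of $\proj_\uv$ are orthogonal to $\uv$, the rank-one term is annihilated by the projection in \eqref{eq:proj_hessian}, leaving $\hess(\uv) = \proj_\uv^\top B\, \proj_\uv$ with $B := 2\wto(I,I,\uv) - \lambda I$.

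The key observation is that $B$ is symmetric and has $\uv$ as an eigenvector: using $\wto(I,I,\uv)\uv = \wto(I,\uv,\uv) = \lambda\uv$ gives $B\uv = \lambda\uv$, and by Lemma \ref{lem:eig_to_W} the eigenvalue satisfies $\lambda = 1/\sqrt{p_i}\geq 1 >0$. Hence the null space of $B$ lies entirely in $\uv^\perp$, which is $B$-invariant, so it is captured exactly by the restriction to $\uv^\perp$; a short argument then shows $\hess(\uv)$ has full rank $d-1$ if and only if $B$ itself has full rank $d$. This reduction — turning a statement about the $(d-1)$-dimensional projected Jacobian into one about the full matrix $B$ — is the step I expect to require the most care, since it is what lets the $-\lambda I$ term later recombine with a latent second moment.

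It then remains to identify when $B$ is invertible. Writing $A := W\wm$, the relations \eqref{eq:moments_relations} and \eqref{eq:tensor_wight} give $\wto = \hto(A,A,A)$, while the whitening identity \eqref{eq:M_def} becomes $A^\top \hso A = I_d$; since $\hso\succ0$ this also forces $A$ to be invertible. For a relevant eigenpair, the definition of $\Uast$ together with Proposition \ref{lem:pseudo} gives $\wm\uv/\lambda = \Winv\ev_i$, the $i$-th column of $\Winv$, hence $A\uv = W\wm\uv = \lambda\,W\Winv\ev_i = \lambda\ev_i$. Substituting and using multilinearity, $\wto(I,I,\uv) = \hto(A,A,A\uv) = \lambda\,A^\top \hto(I,I,\ev_i)A$, so that
\[
B = \lambda\big(2A^\top \hto(I,I,\ev_i)A - A^\top\hso A\big) = \lambda\,A^\top\big(2\hto(I,I,\ev_i) - \hso\big)A.
\]
Because $\lambda>0$ and $A$ is invertible, $B$ has full rank if and only if $2\hto(I,I,\ev_i)-\hso$ does, which is exactly condition \textit{(II)}. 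As this holds for every $i\in[d]$, every eigenpair in $\Uast$ is Newton-stable.
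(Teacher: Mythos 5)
Your proposal is correct and follows essentially the same route as the paper's proof: compute $\nabla\gv(\uv)$, annihilate the rank-one term by projecting onto $\uv^\perp$, and use $W\wm\uv=\lambda\ev_i$ together with the whitening identity $I_d=(W\wm)^\top\hso\,(W\wm)$ to rewrite the projected Jacobian as $\lambda\,\proj_{\uv}^\top(W\wm)^\top\bigl(2\hto(I,I,\ev_i)-\hso\bigr)(W\wm)\proj_{\uv}$, at which point condition \textit{(II)} applies. The one difference is in your favor: your observation that $B=2\wto(I,I,\uv)-\lambda I$ is symmetric with $B\uv=\lambda\uv$ and $\lambda\geq 1>0$, so that $\uv^\perp$ is $B$-invariant and $\hess(\uv)$ has full rank $d-1$ if and only if $B$ has full rank $d$, explicitly justifies a step the paper states tersely --- compressing a full-rank symmetric indefinite matrix to a subspace can in general lose rank, and it is exactly this eigenvector structure that rules that out here.
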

Hence, under conditions \textit{(I)-(II)}, 
the homotopy method in \citet{chen2016computing}, or alternatively the 
O--NCM with a sufficiently large number of random initializations \citep{jaffe2017newton}, 
are guaranteed to compute a candidate set which includes 
all the columns of $\Winv$. 
%
The next step is to extract $\Vast$ out of $V$.

\paragraph{Filtering} 
As suggested by Eq.\ \eqref{eq:rigid_F} we select the subset of vectors  $\Vsub\subseteq V$ that satisfy the binary constraints,
\begin{equation}
\label{eq:filtering_noiseless}
\Vsub=  \{\vv \in V : {\bm v^T} \samp \in \{0,1\}^n\}.
\end{equation}
Indeed, 
under condition \text{(I)},
Proposition \ref{lem:pseudo} implies that $\Vsub = \Vast$ and the weight matrix is thus $W = \Vsub^\dagger$.

Algorithm \ref{algo:noiseless} summarizes our method for estimating $W$ in the noiseless case and has the following recovery guarantee.

\begin{theorem} 
\label{thm:consistency_noiseless}
Let $\samp$ be a matrix of $n$ samples from model \eqref{eq:model} with $\sigma=0$. 
If 
conditions \textit{(I)-(II)} 
 hold, then Algorithm \ref{algo:noiseless} recovers $W$ exactly.
\end{theorem}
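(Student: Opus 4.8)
The plan is to follow the two stages of Algorithm~\ref{algo:noiseless}---candidate generation and filtering---and to verify each by assembling Lemma~\ref{lem:eig_to_W}, Lemma~\ref{lem:eig_stability}, and Proposition~\ref{lem:pseudo}, closing with an elementary pseudo-inverse identity. One preliminary observation makes the population development of Section~\ref{sec:method} usable in the finite-sample noiseless setting: since $\sigma=0$ gives $\samp=W^\top H$ exactly, the moment relations \eqref{eq:moments_relations} hold verbatim for the \emph{empirical} moments, as $\tfrac1n\samp\samp^\top=W^\top\big(\tfrac1n HH^\top\big)W$ and the empirical third moment factors analogously as a multilinear image under $(W,W,W)$ of the empirical latent third moment. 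Hence Lemmas~\ref{lem:eig_to_W}--\ref{lem:eig_stability} and Proposition~\ref{lem:pseudo} may be invoked for the quantities the algorithm actually computes from $\samp$, with the latent moments $\hso,\hto$ read empirically.

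First I would argue that the candidate set produced by the algorithm contains $\Vast$. By Lemma~\ref{lem:eig_to_W}, condition~\textit{(I)} already gives $\Vast\subseteq V$ for the idealized set \eqref{eq:subsets} formed from \emph{all} eigenpairs of $\wto$ with $\lambda\geq1$. The gap is that any usable eigenpair solver---the homotopy method of \citet{chen2016computing}, or O--NCM with sufficiently many random initializations---returns only \emph{Newton-stable} eigenpairs. This is exactly where condition~\textit{(II)} is needed: Lemma~\ref{lem:eig_stability} shows that under \textit{(I)--(II)} every eigenpair in the relevant set $\Uast$, i.e.\ the $d$ eigenpairs mapping to $\Vast$, is Newton-stable and is therefore guaranteed to be recovered. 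Thus the computed candidate set $V$ still satisfies $\Vast\subseteq V$.

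Next I would show the filter \eqref{eq:filtering_noiseless} extracts exactly $\Vast$. The key point is that every candidate lies in $\spn(\samp)$: the whitening matrix satisfies $\spn(K)=\spn(\fso)$, and since $\hso$, $W$ are full rank and $H$ is rigid one has $\spn(\fso)=\spn(\samp)$, so each candidate $K\uv/\lambda\in\spn(K)$ lies in $\spn(\samp)$ and is nonzero. Proposition~\ref{lem:pseudo} then applies to every candidate and asserts that, for $\vv\in\spn(\samp)\setminus\{\bm 0\}$, the binary constraint $\vv^\top\samp\in\{0,1\}^n$ holds \emph{iff} $\vv$ is a column of $\Winv$. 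Consequently no spurious candidate survives the filter, while every column of $\Winv$ contained in $V$ does; together with $\Vast\subseteq V$ this gives $\Vsub=\Vast$.

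Finally, viewing $\Vsub=\Vast$ as the matrix $\Winv\in\R^{m\times d}$ (up to a harmless column permutation), the algorithm's output is $\Vsub^\dagger=(\Winv)^\dagger=W$, which holds because $\Winv=W^\dagger$ has full column rank $d$ and hence $(W^\dagger)^\dagger=W$; this is exact recovery. I expect the only genuinely delicate step to be the second one---certifying that the relevant eigenpairs are actually \emph{computed} rather than merely present in the idealized set $V$---which is precisely the content of the Newton-stability guarantee in Lemma~\ref{lem:eig_stability}, and hence the reason condition~\textit{(II)} is imposed. The remaining steps are direct applications of the cited results together with routine linear algebra.
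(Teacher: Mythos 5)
Your proof is correct and follows essentially the same route as the paper, which establishes Theorem~\ref{thm:consistency_noiseless} by combining Lemma~\ref{lem:eig_to_W} (containment $\Vast\subseteq V$), Lemma~\ref{lem:eig_stability} (Newton-stability under condition~\textit{(II)}, so the eigenpair solvers actually compute $\Uast$), and Proposition~\ref{lem:pseudo} (the filter yields $\Vsub=\Vast$, whence $W=\Vsub^\dagger$). Your preliminary observation that at $\sigma=0$ the moment relations \eqref{eq:moments_relations} hold verbatim with the latent moments read empirically is precisely the clarification the paper itself makes in the remark following the theorem, so no gap remains.
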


We note that when $\sigma=0$
and conditions \textit{(I)-(II)} hold for the \emph{empirical} latent moments
$\hat \hso$ and $\hat \hto$ (rather than $\hso$ and $\hto$),
Algorithm \ref{algo:noiseless} \emph{exactly} recovers $W$ 
when $\fso$ and $\fto$ are 
replaced by their finite sample estimates.
%
The matrix factorization method SHL in \citet{slawski2013matrix} also exactly recovers $W$ in the case $\sigma=0$.
While its runtime is also exponential in $d$,
practically it may be much faster than our proposed
tensor based approach.
This is because SHL constructs a candidate set of size $2^d$ that can be computed by a suitable linear transformation of the \emph{fixed} set $\{0,1\}^d$, as opposed to our candidate set which is constructed by eigenpairs of a $d\times d\times d$ tensor. 
However, SHL does not take advantage of the large number of samples $n$, since only  $m\times d$ sub-matrices of the $m\times n$ sample matrix $\samp$ are used for constructing its candidate set.
Indeed, in the noisy case where $\sigma>0$, SHL has no consistency guarantees and as demonstrated by the simulation results in Section \ref{sec:experiments} it may fail at high levels of noise.
In the next section we derive a robust version of our method that {consistently} estimates $W$ for any noise level $\sigma\geq0$.

\begin{algorithm}[t]   
	\caption{Recover $W$ when $\sigma=0$}
	\label{algo:noiseless}   
	\hspace*{\algorithmicindent} \textbf{Input:} sample matrix $\samp$
	\begin{algorithmic}[1]
		\STATE estimate second and third order moments $\fso$, $\fto$
		\STATE set $d = \rank(\fso)$
		\STATE compute $K\subseteq\spn(\fso)$ such that $\wm^\top \fso \wm = I_d$
		\STATE compute whitened tensor $\wto = \fto(\wm,\wm,\wm)$
		\STATE compute the set $U$ of eigenpairs of $\wto$
		\STATE compute the candidate set $V$ in \eqref{eq:subsets}
		\STATE filter $\Vsub = \{\vv \in V  : {\vv^\top} \samp \in \{0,1\}^n\}$
		\STATE {\bf return} the pseudo-inverse $W = \Vsub^\dagger$
\end{algorithmic}\end{algorithm}

\section{Learning in the presence of noise}
\label{sec:method_noise}
The method in Section \ref{sec:method} to estimate $W$ is clearly inadequate when $\sigma>0$.
However, we now show that  by making several adjustments, the two steps of computing the candidate set and its filtering can be both made robust to noise, yielding a consistent estimator of $W$ for any $\sigma\geq0$.

\paragraph{Computing the candidate set}
As in the case $\sigma=0$ our goal in the first step is to compute a finite candidate set $\Vsig\subseteq \R^m$ that is guaranteed to contain accurate estimates for the $d$ columns of $\Winv$.
To this end, in addition to the second and third order moments $\fso$ and $\fto$ in \eqref{eq:moments}, we also consider the first order moment $\ffo = \E[\fv]$ and define the following 
noise corrected moments,
\begin{salign}
\label{eq:moments_sigma}
\fso_{\sigma} \,=\,\,& \fso - \sigma^2 I_m,
\\
\fto_{\sigma} \,=\,\,& \fto- \sigma^2 \sum_{i=1}^m \Big ( \bm\mu \otimes \bm e_i \otimes \bm e_i \,+ 
\bm e_i \otimes \bm\mu \otimes  \bm e_i +  
\bm e_i \otimes \bm e_i \otimes \bm\mu \Big).
\end{salign}
By assumption, the noise satisfies $\E[\noise_i^3]=0$.
Thus, similarly to the moment equations in \eqref{eq:moments_relations},
the modified moments in \eqref{eq:moments_sigma} are related to these of $\hv$
by \citep{anandkumar2014tensor}
\beqn
\label{eq:moments_relations_noise}
\fso_\sigma = W^\top \hso W,
\qquad
\fto_\sigma = \hto(W,W,W).
\eeqn
Hence, if $\fso_{\sigma}$ and $\fto_{\sigma}$ were known exactly, a candidate set $\Vsig$ that \emph{contains} $\Vast$ could be obtained exactly as in the noiseless case, but with $\fso$ and $\fto$ replaced with $\fso_\sigma$ and $\fto_\sigma$;
namely, first calculate the whitening matrix $\Ksig$ such that $\Ksig^\top \fso_\sigma\Ksig = I_d$ and then compute the eigenpairs of the population whitened tensor 
\beqn
\label{eq:white_tensor_noisy}
\wto_\sigma = \fto_\sigma(\Ksig,\Ksig,\Ksig).
\eeqn
In practice, $\sigma^2$, $d$, $\ffo$, $\fso$ and $\fto$ are all unknown and need to be estimated from the sample matrix $\samp$.
Assuming $m>d$, the parameters $\sigma^2$ and $d$ can be consistently estimated, for example, by the methods in \citet{kritchman2009non}.
For simplicity, we assume they are known exactly.
Similarly, $\ffo$, $\fso$, $\fto$ are consistently estimated by their empirical means,
$\effo$, $\efso$, and $\efto$.
%
%
So, after computing the plugin estimates $\eKsig$ such that $\eKsig^\top \efso_\sigma \eKsig = I_d$ and $\ewto_\sigma = \efto_\sigma(\eKsig,\eKsig,\eKsig)$,
we compute the set $\eUsig$ of eigenpairs of $\ewto_\sigma$ and for some small $0<\lamths =O(n^{-\frac{1}{2}})$ take our candidate set as
\beqn
\label{eq:subsets_noisy}
\eVsig = 
\{\eKsig\uv/\lambda : (\uv,\lambda) \in \eUsig
\text{ with } 
1\!-\!\lamths \leq \lambda \}.
\eeqn
The following lemma shows that under conditions \textit{(I)-(II)} the above procedure is stable to small perturbations.
Namely, for perturbations of order $\delta\ll 1$ in $\wto_\sigma$ and $\Ksig$, the method computes a candidate set $\eVsig$ that contains a subset of $d$ vectors that are $O(\delta)$ close to the columns of $\Vast$.
Furthermore, these $d$ vectors all correspond to Newton-stable eigenpairs of the perturbed tensor and are $\Omega(1)$ separated from the other candidates in $\eVsig$.

\begin{lemma}
\label{lem:candidate_stability}
Let $\Ksig,\wto_\sigma$ be the population quantities in \eqref{eq:white_tensor_noisy} and let $\eKsig,\ewto_\sigma$ be their perturbed versions, inducing the candidate set $\eVsig$ in \eqref{eq:subsets_noisy}.
If conditions \textit{(I)-(II)} hold, then there are $c,\delta_0,\delta_1>0$ such that for all $0\leq \delta \leq \delta_0$ the following holds:
If the perturbed versions satisfy
\beqn
\label{eq:eig_stab_precond}
\max\{\nrm{\ewto_\sigma - \wto_\sigma}_F, \nrm{\eKsig- \Ksig}_F\} \leq \delta,
\eeqn
then any $\vast\in \Vast$ has a unique $\hvv\in \eVsig$ such that
\beqn
\label{eq:ev_up}
\nrm{\hvv - \vast}\leq c\delta.
\eeqn
Moreover, $\hvv$ corresponds to a Newton-stable eigenpair of $\ewto_\sigma$ with eigenvalue $\lambda\geq 1-c\delta$ and for all $\tvv\in\eVsig\setminus\{\hvv\}$, 
\begin{equation}
\nrm{\tvv-\vast}\geq \delta_1
> 2c\delta.
\label{eq:separated}
\end{equation}
\end{lemma}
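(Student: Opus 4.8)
The plan is to run a quantitative perturbation argument around each population eigenpair, using the Newton-stability supplied by Lemma~\ref{lem:eig_stability} as the non-degeneracy that makes the implicit function theorem applicable. First I would record the population picture. By the moment identities \eqref{eq:moments_relations_noise}, the noise-corrected whitened tensor $\wto_\sigma$ has \emph{exactly} the eigenstructure analyzed in Lemmas~\ref{lem:eig_to_W}--\ref{lem:eig_stability}, since $\fso_\sigma=W^\top\hso W$ and $\fto_\sigma=\hto(W,W,W)$ coincide with the noiseless relations. Hence $\wto_\sigma$ has $d$ relevant eigenpairs $(\hat{\uv}_i^{\,*},\lambda_i^*)\in\Uast$ with $\lambda_i^*=1/\sqrt{p_i}\ge 1$ and $\Ksig\hat{\uv}_i^{\,*}/\lambda_i^*=\vast_i$, each Newton-stable, i.e.\ the projected Jacobian $\hess(\hat{\uv}_i^{\,*})$ of \eqref{eq:proj_hessian} is invertible with smallest singular value at least some $s_0>0$. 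Moreover, by \citet{cartwright2013number} the full eigenpair set of $\wto_\sigma$ is finite, and distinct eigenpairs $(\uv,\lambda)$ give distinct $\Ksig\uv/\lambda$ (as $\Ksig$ has full column rank and $\nrm{\uv}=1$ fixes the scaling), so the induced population candidates form a finite set of distinct points. I therefore set $2\delta_1$ to be the smallest distance between any $\vast_i$ and any other population candidate; this is positive.

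Next I would perturb a single relevant eigenpair. Writing $\gv_\T$ for the map \eqref{eq:g_function} attached to a tensor $\T$, the assignment $\T\mapsto\gv_\T$ is multilinear, hence smooth, and under \eqref{eq:eig_stab_precond} one has $\sup_{\uv\in\sphere_{d-1}}\nrm{\gv_{\ewto_\sigma}(\uv)-\gv_{\wto_\sigma}(\uv)}=O(\delta)$. Since $\gv_{\wto_\sigma}(\hat{\uv}_i^{\,*})=\bm 0$ and the projected Jacobian at $\hat{\uv}_i^{\,*}$ is invertible, the implicit function theorem (equivalently, the attracting-fixed-point and quadratic-convergence property of O--NCM in \citet{jaffe2017newton}) yields, for all $\delta\le\delta_0$, a unique eigenpair $(\hvv$-preimage$)$ $\hat{\uv}_i$ of $\ewto_\sigma$ in a fixed ball $\ball(\hat{\uv}_i^{\,*},r)$ with $\nrm{\hat{\uv}_i-\hat{\uv}_i^{\,*}}\le c_1\delta$; continuity of $\hess$ in $(\uv,\T)$ and its invertibility at $\hat{\uv}_i^{\,*}$ force $\hess(\hat{\uv}_i)$ to stay invertible, so $\hat{\uv}_i$ is Newton-stable. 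Its eigenvalue satisfies $\hat\lambda_i=\ewto_\sigma(\hat{\uv}_i,\hat{\uv}_i,\hat{\uv}_i)\ge\lambda_i^*-c_2\delta\ge 1-c_2\delta$, so provided the threshold obeys $\lamths\ge c_2\delta$ (the regime $\lamths,\delta=O(n^{-1/2})$) it clears the cutoff in \eqref{eq:subsets_noisy} and $\hvv_i=\eKsig\hat{\uv}_i/\hat\lambda_i$ enters $\eVsig$. A quotient perturbation bound combining $\nrm{\eKsig-\Ksig}_F\le\delta$, $\nrm{\hat{\uv}_i-\hat{\uv}_i^{\,*}}\le c_1\delta$, $|\hat\lambda_i-\lambda_i^*|\le c_2\delta$, and $\lambda_i^*\ge 1$ then gives $\nrm{\hvv_i-\vast_i}\le c\delta$, establishing \eqref{eq:ev_up} along with the Newton-stability and eigenvalue claims.

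Finally I would prove uniqueness and the separation \eqref{eq:separated}. On the complement of $\bigcup_k\ball(\hat{\uv}_k^{\,*},r)$ (the union over \emph{all} population eigenpairs), compactness of $\sphere_{d-1}$ gives $\nrm{\gv_{\wto_\sigma}}\ge\eta>0$, whence $\nrm{\gv_{\ewto_\sigma}}\ge\eta-O(\delta)>0$ for small $\delta$; thus every eigenpair of $\ewto_\sigma$ lies inside one of these balls. Choosing $r$ small enough that $\Ksig$ maps each $\ball(\hat{\uv}_k^{\,*},r)$ to a set of diameter below $\delta_1$, any surviving $\tvv\in\eVsig$ is within $\delta_1$ of some population candidate; if $\tvv\ne\hvv_i$, it is attached to a ball other than $\ball(\hat{\uv}_i^{\,*},r)$ and hence to a population candidate at distance $\ge 2\delta_1$ from $\vast_i$, giving $\nrm{\tvv-\vast_i}\ge\delta_1$. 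Shrinking $\delta_0$ so that $2c\delta_0<\delta_1$ then makes $\hvv_i$ the unique candidate within $c\delta$ of $\vast_i$.

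I expect the main obstacle to be the quantitative implicit-function step: converting the qualitative Newton-stability of Lemma~\ref{lem:eig_stability} into explicit, $\delta$-uniform existence, uniqueness, and Lipschitz bounds with constants depending only on the population tensor (through $s_0$), rather than on the particular perturbation. A secondary difficulty is ruling out spurious eigenpairs of $\ewto_\sigma$ drifting toward $\vast_i$, which is precisely where the finiteness and discreteness of the population eigenpairs of $\wto_\sigma$ and the uniform lower bound $\nrm{\gv_{\wto_\sigma}}\ge\eta$ off the eigenpair neighborhoods are needed.
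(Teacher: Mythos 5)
Your proposal follows essentially the same route as the paper: the proof in Appendix~\ref{sec:implicit} reduces the lemma to a standalone perturbation result (Lemma~\ref{lem:pert_bounds}), proved by applying the implicit function theorem to the map $\bm Q\colvec{\vv}{\tv} = \T(I,\vv,\vv)-\T(\vv,\vv,\vv)\vv$ jointly in the eigenvector and the tensor entries at each Newton-stable eigenpair, which yields exactly your existence/uniqueness/Lipschitz bounds, the eigenvalue control $|\tlambda-\lambda|\leq c_2\eps$, preserved Newton-stability via continuity of the smallest singular value of the Jacobian, and the $\eps_0$-isolation that gives \eqref{eq:separated}, all then propagated through the perturbed whitening map $\uv\mapsto \eKsig\uv/\lambda$ as you describe. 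The only substantive divergence is in packaging the separation step---you use a global covering/compactness lower bound on $\nrm{\gv}$ off neighborhoods of the population eigenvectors, and your appeal to \citet{cartwright2013number} for finiteness of the eigenpair set is strictly valid only for \emph{generic} tensors (though this is repairable, since Newton-stable eigenpairs are isolated zeros of $\gv$ and the eigenpair-to-candidate map is injective, so the infimum defining $\delta_1$ is positive regardless), whereas the paper derives the same $\delta_1$ from the uniform local isolation of the perturbed Newton-stable eigenpair.
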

The proof
is based on the
implicit function theorem \citep{hubbard2015vector};
small perturbations to a tensor result in small perturbations to its Newton-stable eigenpairs.

Now, by the delta method, the plugin estimates $\eKsig$ and $\ewto_\sigma$
are both $O_P(n^{-\frac{1}{2}})$ close to their population quantities,
\begin{salign}
\label{eq:moments_rate}
\nrm{\eKsig- \Ksig}_F & \,=\,  O_P(n^{-\frac{1}{2}}),
\\
\nrm{\ewto_{\sigma} - \wto_{\sigma}}_F &\,=\, O_P(n^{-\frac{1}{2}}).
\end{salign}
By \eqref{eq:moments_rate}, we have that \eqref{eq:eig_stab_precond} holds with $\delta = O_P(n^{-\frac{1}{2}})$.
Hence, by Lemma \ref{lem:candidate_stability}, the eigenpairs of $\ewto_\sigma$ provide a candidate set $\eVsig$ that contains $d$ vectors that are $O_P(n^{-\frac{1}{2}})$ close to the columns of $\Winv$.
In addition, any irrelevant candidate is $\Omega_P(1)$ far away from $\Vast$.
As we show next, these properties ensure that 
with high probability the $d$ relevant candidates can be identified in $\eVsig$.

\paragraph{Filtering}
Given the candidate set $\eVsig$ computed in the first step, our goal now is to find a set $\Vsubsig\subseteq \eVsig$ of $d$ vectors that accurately estimate the $d$ columns of $\Winv$.
To simplify the theoretical analysis, we assume we are given a fresh sample $\samp$ of size $n$ that is independent of $\eVsig$. This can be achieved by first splitting a sample of size $2n$ into two sets of size $n$, one for each step.

Recall that for a vector $\fv$ from model \eqref{eq:model} and any $\vv\in\R^m$
\beqn
\label{eq:signal_proj}
\vv^\top \fv 
= \vv^\top W^\top \hv +  \sigma\vv^\top\noisev.
\eeqn
Obviously, when $\sigma>0$, the filtering procedure in \eqref{eq:filtering_noiseless} for the noiseless case is inadequate, as typically no $\vast\in \Vast$ will exactly satisfy ${\vast}^\top \samp \in \{0,1\}^n$.
Nevertheless, we expect that for a sufficiently small noise level $\sigma$, any $\vv\in\eVsig$ that is close to some $\vast\in\Vast$ will result in $\vv^\top \samp$ that is close to being binary, while any $\vv$ sufficiently far from $\Vast$ will result in $\vv^\top \samp$ that is far from being binary.
A natural measure for how $\vv^\top \samp$ is ``far from being binary'',
similar to the one
used for filtering in \citet{slawski2013matrix}, 
is simply its deviation from its binary rounding,
\beqn
\label{eq:score_round}
\min_{\bv\in\{0,1\}^n} 
\frac{\nrm{\vv^T \samp - \bv}^2}{n\nrm{\vv}^2}.
\eeqn
Eq. \eqref{eq:score_round} works extremely well for small $\sigma$, but fails for high noise levels. 
Here we instead propose a filtering procedure based on the classical Kolmogorov-Smirnov goodness of fit test \citep{lehmann2006testing}. 
As we show below, this approach gives consistent estimates of $W$ for any $\sigma>0$. 

Before describing the test, we first introduce the probabilistic analogue of the rigidity condition \eqref{eq:rigid}.
For any $\uv\in\R^d$, define its corresponding expected binary rounding,
\beq
\expbr(\uv) = \E_{\hv\sim P_{\hv}}\left[\min_{b\in\{0,1\}} (\uv^\top \hv - b)^2\right].
\eeq
Clearly, $\expbr(\bm 0)=0$ and $\expbr(\bm e_i)=0$ for all $i\in[d]$.
We pose the following \emph{expected rigidity} condition:
for all $\uv \neq \bm 0$,
\beqn
\label{eq:rigid_exp}
\expbr(\uv) = 0
\quad \Leftrightarrow \quad
\uv\in
\{\bm e_i\}_{i=1}^d
.
\eeqn
Analogously to the deterministic rigidity condition in \eqref{eq:rigid}, condition \eqref{eq:rigid_exp} is satisfied, for example, when $P_{\hv}(\bm e_i)> 0$ and $P_{\hv}(\bm e_i + \bm e_j)> 0$ for all $i\neq j\in[d]$.

To introduce our filtering test, recall that under model \eqref{eq:model}, $\noisev\sim\mathcal{N}(\bm 0,I_m)$. 
Hence, for any fixed $\vv$, the random variable $\vv^\top \fv$ in \eqref{eq:signal_proj} is distributed according to the following univariate Gaussian mixture model (GMM),
\begin{equation}
\vv^\top \fv \sim
 \sum_{\hv \in\{0,1\}^d} P_{\hv}[\hv]\cdot \mathcal{N}(
 \vv^\top W^\top \hv,
 \sigma^2\nrm{\vv}^2).
\label{eq:mix_all}
\end{equation}
Denote the cumulative distribution function of $\vv^\top \fv$ by $F_{\vv}$.
For general $\vv$, this mixture may have up to $2^d$ distinct components.
However, for $\vast\in\Vast$, it reduces to a mixture of \emph{two} components with means at $0$ and $1$.
More precisely, for any candidate $\vv$ with corresponding eigenvalue $\lambda(\vv)\geq 1$, define the GMM with {two} components
\beqn
\label{eq:gmm2}
(1-\tfrac{1}{\lambda(\vv)^2})\cdot\mathcal{N}(0, \sigma^2\nrm{\vv}^2) 
 + \tfrac{1}{\lambda(\vv)^2} \cdot\mathcal{N}(1, \sigma^2 \nrm{\vv}^2).
\eeqn
Denote its cumulative distribution function by $G_{\vv}$.
The following lemma
shows that under 
condition \eqref{eq:rigid_exp}, $G_{\vv}$ fully characterizes the columns of $\Winv$.
\begin{lemma}
\label{lem:test_vast}
Let $\Ksig,\wto_\sigma$ be the population quantities in \eqref{eq:white_tensor_noisy} and let $\Vsig$ be the set of population candidates as computed from the eigenpairs of $\wto_\sigma$.
If conditions \textit{(I)-(II)} and the expected rigidity condition \eqref{eq:rigid_exp} hold, then for any $\vv\in \Vsig$ and its corresponding eigenvalue $\lambda(\vv)$,
\beq
F_{\vv} = G_{\vv}
\quad\Leftrightarrow\quad
\vv\in\Vast.
\eeq
\end{lemma}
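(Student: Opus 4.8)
The plan is to prove the two implications of the biconditional separately. The central tool is the identifiability of finite univariate Gaussian mixtures sharing a common variance, which I would combine with the expected rigidity condition \eqref{eq:rigid_exp} and the injectivity of $W$ on $\spn(W^\top)$.

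First I would dispatch the easy direction $(\Leftarrow)$. Suppose $\vv=\vast_i$ is the $i$-th column of $\Winv$. Since $W\Winv=I_d$ we have $W\vast_i=\bm e_i$, so $\vv^\top W^\top\hv=(W\vv)^\top\hv=h_i\in\{0,1\}$. Hence every component mean in the mixture \eqref{eq:mix_all} is either $0$ or $1$, and all components share the variance $\sigma^2\nrm{\vv}^2$; collapsing them by their two distinct means turns $F_\vv$ into $(1-p_i)\,\mathcal N(0,\sigma^2\nrm{\vv}^2)+p_i\,\mathcal N(1,\sigma^2\nrm{\vv}^2)$ with $p_i=\E[h_i]$. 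By Lemma \ref{lem:eig_to_W} the eigenvalue attached to $\vast_i$ is $\lambda=1/\sqrt{p_i}$, so $p_i=1/\lambda^2$ and this is exactly $G_\vv$ as defined in \eqref{eq:gmm2}; thus $F_\vv=G_\vv$.

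For the converse $(\Rightarrow)$, I would set $\uv=W\vv$, so the component means of \eqref{eq:mix_all} are $\{\uv^\top\hv:\hv\in\supp(P_\hv)\}$, again all carrying the common variance $\sigma^2\nrm{\vv}^2>0$. Invoking identifiability of equal-variance Gaussian mixtures, the assumption $F_\vv=G_\vv$ forces the induced mixing distribution on the means to agree with that of $G_\vv$, which is supported on $\{0,1\}$. This yields $\uv^\top\hv\in\{0,1\}$ for every $\hv\in\supp(P_\hv)$, and therefore $\expbr(\uv)=0$. Since $\vv\in\Vsig$ is nonzero and lies in $\spn(\fso_\sigma)=\spn(W^\top)$, and $WW^\top$ is invertible (as $W$ has full rank $d$), the map $\vv\mapsto W\vv$ is injective on $\spn(W^\top)$; in particular $\uv\neq\bm 0$. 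The expected rigidity condition \eqref{eq:rigid_exp} then gives $\uv=\bm e_i$ for some $i$, and because the column $\vast_i$ of $\Winv$ also satisfies $W\vast_i=\bm e_i$ and lies in $\spn(W^\top)$, injectivity forces $\vv=\vast_i\in\Vast$.

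The hard part will be making the identifiability step fully rigorous. I need to account for the fact that several $\hv$ in the support may map to the same value $\uv^\top\hv$, so that the mixture \eqref{eq:mix_all} genuinely has fewer than $|\supp(P_\hv)|$ distinct components; the correct statement to invoke is that two equal-variance Gaussian mixtures coincide as distributions if and only if their mixing measures on the means coincide. I also need to treat the boundary case $\lambda=1$, where $G_\vv$ degenerates to a single Gaussian centered at $1$; there identifiability forces $\uv^\top\hv\equiv 1$ on the support, which \eqref{eq:rigid_exp} again sends to some $\bm e_i$ (the ``always on'' unit). Everything else is routine linear algebra flowing from the moment relations.
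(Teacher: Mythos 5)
Your proof is correct, and on the hard direction it takes a genuinely different (and arguably cleaner) route than the paper's. The $(\Leftarrow)$ direction is identical to the paper's: $W\vast=\bm e_i$ collapses the mixture \eqref{eq:mix_all} to two components with means $0$ and $1$, and Lemma \ref{lem:eig_to_W} supplies $p_i=1/\lambda(\vast)^2$, so $F_{\vast}=G_{\vast}$. For $(\Rightarrow)$, the paper argues contrapositively and quantitatively: it imports the separation bound \eqref{eq:separated} from Lemma \ref{lem:candidate_stability} to conclude that any $\vv\in\Vsig\setminus\Vast$ has $\uv=W\vv$ bounded away from $\{\bm e_i\}_{i=1}^d$, hence $\expbr(\uv)\geq\eta_0>0$ by expected rigidity, hence a mixture component whose mean is bounded away from both $0$ and $1$, yielding the uniform gap $\sup_t|F_{\vv}(t)-G_{\vv}(t)|\geq\eta_1$ in \eqref{eq:KS_upper_bound} --- a quantitative conclusion the paper reuses when proving the $\Omega_P(1)$ half of Lemma \ref{lem:cont_map_thm}. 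You instead argue forward from $F_{\vv}=G_{\vv}$: identifiability of equal-variance Gaussian location mixtures (equality of the mixtures forces equality of the mixing measures, e.g.\ by dividing characteristic functions by the never-vanishing Gaussian factor $e^{-\sigma^2\nrm{\vv}^2t^2/2}$ when $\sigma>0$; when $\sigma=0$ both laws are purely atomic and the conclusion is immediate) gives $\uv^\top\hv\in\{0,1\}$ on $\supp(P_{\hv})$, i.e.\ $\expbr(\uv)=0$, after which \eqref{eq:rigid_exp} together with injectivity of $W$ on $\spn(W^\top)$ (and the fact that candidates lie in $\spn(\Ksig)\subseteq\spn(W^\top)$ and are nonzero, so $\uv\neq\bm 0$) pins $\vv$ to a column of $\Winv$. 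Your route avoids Lemma \ref{lem:candidate_stability} entirely and makes explicit the identifiability step that the paper leaves implicit in the sentence ``It follows that $F_{\vv}$ has a component with mean that is bounded away from both $0$ and $1$ and thus $F_{\vv}\neq G_{\vv}$''; your explicit treatment of the degenerate $\lambda(\vv)=1$ case (where $G_{\vv}$ is a single Gaussian at $1$, matching an ``always on'' unit) also covers a corner the paper does not mention. What your argument does not deliver is the uniform quantitative gap $\eta_1$ --- not part of the lemma's statement, but precisely what the paper's version buys for later use --- so if you intend your lemma to feed into Lemma \ref{lem:cont_map_thm} you would need to add a finiteness-of-$\Vsig$ (or compactness) argument to upgrade $F_{\vv}\neq G_{\vv}$ to a uniform bound.
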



Given the empirical candidate set $\eVsig$, Lemma \ref{lem:test_vast} suggests ranking 
all $\hvv\in\eVsig$ according to their goodness of fit to $G_{\hvv}$ and taking the $d$ candidates with the best fit.
More precisely, given a sample $\samp=[\fv_1,\dots,\fv_n]$ that is independent of $\eVsig$, for each candidate $\hvv\in\eVsig$ we compute the empirical cumulative distribution function,
\begin{equation*}
\eCom_{\hvv}(t) = \frac{1}{n} \sum_{j=1}^n \chr\{\hvv^\top\fv_j\leq t\},
\qquad t\in\R,
\end{equation*}
and calculate its Kolmogorov-Smirnov score
\beqn
\label{eq:emo_score}
\score_n(\hvv) = \sup_{t\in\R} | \eCom_{\hvv}(t) - G_{\hvv}(t)|.
\eeqn
Our estimator $\Vsubsig\subseteq\eVsig$ for $\Winv$ is then the set of $d$ vectors with the smallest scores $\score_n(\hvv)$.
The estimator for $W$ is the pseudo-inverse, $\Wemp=\Vsubsig^\dagger$.

The following lemma shows that for sufficiently large $n$, $\score_n(\hvv)$ accurately distinguishes between $\hvv\in\eVsig$ that are close to the columns of $\Winv$ from these that are not.

\begin{lemma}
\label{lem:cont_map_thm}
Let $\vast\in\Vast$ and $\hvv_{(1)},\hvv_{(2)},\dots$ a sequence of random vectors such that $\nrm{\hvv_{(n)} - \vast} = O_P(n^{-\frac{1}{2}}).$
Then, 
$$\score_n(\hvv_{(n)}) = o_P(1).$$
In contrast, if $\min_{\vast\in\Vast}\nrm{\hvv_{(n)} - \vast} = \Omega_P(1)$,
then 
$$\score_n(\hvv_{(n)}) = \Omega_P(1),$$
provided the expected rigidity condition \eqref{eq:rigid_exp} holds.
\end{lemma}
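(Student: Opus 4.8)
The plan is to control the Kolmogorov--Smirnov score via the triangle inequality, separating the statistical fluctuation of the empirical CDF around the true CDF $F_{\hvv}$ from the deterministic discrepancy between $F_{\hvv}$ and the target $G_{\hvv}$. Since the filtering sample $\samp$ is independent of the candidate set, and hence of $\hvv_{(n)}$, I would first condition on $\hvv_{(n)}=\vv$, so that the projections $\vv^\top\fv_1,\dots,\vv^\top\fv_n$ are i.i.d.\ with the continuous CDF $F_{\vv}$ (here $\sigma>0$ guarantees continuity). The Dvoretzky--Kiefer--Wolfowitz inequality then gives the distribution-free bound $P(\sup_t|\eCom_{\vv}(t)-F_{\vv}(t)|>\eps)\le 2e^{-2n\eps^2}$, uniform over $\vv$, so that $\sup_t|\eCom_{\hvv_{(n)}}(t)-F_{\hvv_{(n)}}(t)|=O_P(n^{-1/2})=o_P(1)$ regardless of which half of the statement we are proving.

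For the first part I would bound $\score_n(\hvv_{(n)})\le \sup_t|\eCom_{\hvv_{(n)}}-F_{\hvv_{(n)}}| + \sup_t|F_{\hvv_{(n)}}-G_{\hvv_{(n)}}|$. The first term is $o_P(1)$ by the above. For the second term I would establish that the maps $\vv\mapsto F_{\vv}$ and $\vv\mapsto G_{\vv}$ (the latter through its eigenvalue $\lambda(\vv)$ and norm $\nrm{\vv}$) are continuous in the sup-norm on CDFs; this is routine because both are finite Gaussian mixtures whose means, mixing weights, and common variance $\sigma^2\nrm{\vv}^2>0$ vary continuously with $\vv$, and Gaussian CDFs are Lipschitz in their parameters when the variance is bounded away from zero. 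Since $\hvv_{(n)}\to\vast$ in probability and $F_{\vast}=G_{\vast}$ by Lemma \ref{lem:test_vast}, continuity forces $\sup_t|F_{\hvv_{(n)}}-G_{\hvv_{(n)}}|=o_P(1)$, giving $\score_n(\hvv_{(n)})=o_P(1)$.

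For the second part I would instead use the reverse triangle inequality $\score_n(\hvv_{(n)})\ge \sup_t|F_{\hvv_{(n)}}-G_{\hvv_{(n)}}|-\sup_t|\eCom_{\hvv_{(n)}}-F_{\hvv_{(n)}}|$, where again the subtracted term is $o_P(1)$. It then suffices to prove a \emph{uniform} lower bound $\sup_t|F_{\vv}-G_{\vv}|\ge c(\eps)>0$ over all candidates $\vv$ with $\min_{\vast\in\Vast}\nrm{\vv-\vast}\ge\eps$. I would argue this by compactness and contradiction: the candidates have bounded norm and the eigenvalues $\lambda(\vv)$ lie in a bounded interval, so if no such $c(\eps)$ existed there would be a sequence $\vv_k$ with $\min_\vast\nrm{\vv_k-\vast}\ge\eps$ yet $\sup_t|F_{\vv_k}-G_{\vv_k}|\to 0$; passing to a subsequence with $\vv_k\to\vv_\infty$ and $\lambda(\vv_k)\to\lambda_\infty$ and using continuity yields $F_{\vv_\infty}=G_{\vv_\infty}$ while still $\min_\vast\nrm{\vv_\infty-\vast}\ge\eps$. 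The crux is then to show this forces $\vv_\infty\in\Vast$, producing the contradiction.

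This last implication is the main obstacle, and it is a Gaussian-deconvolution argument that strengthens Lemma \ref{lem:test_vast} to a limit point rather than a fixed population candidate. Writing $\vv^\top\fv=(W\vv)^\top\hv+\sigma\vv^\top\noisev$, the law $F_{\vv_\infty}$ is the convolution of the discrete law of $(W\vv_\infty)^\top\hv$ with a Gaussian of variance $\sigma^2\nrm{\vv_\infty}^2$, while $G_{\vv_\infty}$ is the same Gaussian convolved with a two-point mass at $\{0,1\}$; equality forces the signal law to equal that two-point mass, because Gaussian smoothing is injective (its characteristic function never vanishes). Consequently $(W\vv_\infty)^\top\hv\in\{0,1\}$ almost surely, i.e.\ $\expbr(W\vv_\infty)=0$, and the expected rigidity condition \eqref{eq:rigid_exp} then gives $W\vv_\infty\in\{\bm e_i\}_{i=1}^d$; combined with $\vv_\infty\in\spn(\fso)$ this pins down $\vv_\infty\in\Vast$. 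The care needed here---checking that the mixing weights also match the deconvolved masses and that $\lambda(\vv)$ stays in a compact range so the limiting object is again a valid candidate---is where I expect the genuine work to lie, the remainder being the routine combination of DKW, continuity, and compactness above.
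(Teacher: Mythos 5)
Your proof is correct and follows the same skeleton as the paper's: sample splitting plus the distribution-free DKW bound for $\nrm{\eCom_{\hvv}-F_{\hvv}}_\infty$, a triangle-inequality decomposition through $F_{\vv}$ and $G_{\vv}$, continuity of $\vv\mapsto F_{\vv},G_{\vv}$ together with $F_{\vast}=G_{\vast}$ (Lemma \ref{lem:test_vast}) for the $o_P(1)$ part, and the reverse triangle inequality for the $\Omega_P(1)$ part. Where you genuinely depart, you are more careful than the paper in two places. First, for the term $\nrm{F_{\hvv}-F_{\vv}}_\infty$ the paper argues pathwise, bounding $|\bm\eta^\top\xv_j|\leq K_n\nrm{\bm\eta}$ with high probability for a slowly growing $K_n$ and then invoking a density bound; your direct parameter-Lipschitz bound on the mixture CDFs (means $\vv^\top W^\top\hv$ Lipschitz in $\vv$, common variance $\sigma^2\nrm{\vv}^2$ bounded away from zero) gets the same conclusion more cleanly. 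Second, and more substantively, the paper's proof of the $\Omega_P(1)$ part assumes $\hvv_{(n)}$ converges to some fixed $\vv\notin\Vast$ and quotes the strict gap $\nrm{F_{\vv}-G_{\vv}}_\infty>0$ established for \emph{population} candidates in the proof of Lemma \ref{lem:test_vast} (Eq.\ \eqref{eq:KS_upper_bound}); the hypothesis $\min_{\vast\in\Vast}\nrm{\hvv_{(n)}-\vast}=\Omega_P(1)$ does not actually supply a convergent limit, so your compactness-and-contradiction argument yielding a \emph{uniform} gap $c(\eps)$ over the $\eps$-separated set, closed by the deconvolution step (the nonvanishing Gaussian characteristic function forces the law of $(W\vv_\infty)^\top\hv$ to equal the two-point mass, whence $\expbr(W\vv_\infty)=0$, and expected rigidity together with $\vv_\infty\in\spn(W^\top)$ pins $\vv_\infty\in\Vast$), is a strictly stronger and more self-contained treatment; it also makes rigorous the paper's brief identifiability claim that a mixture component with mean bounded away from $\{0,1\}$ forces $F_{\vv}\neq G_{\vv}$. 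The only caveats are the ones you already flag: you must check that the candidates lie in a compact set with $\nrm{\vv}$ bounded below and $\lambda(\vv)$ bounded above (this follows from $\lambda\geq 1-\lamths$, $\lambda\leq\nrm{\ewto_\sigma}_F$, and the full rank of the whitening matrix), and that $\lambda(\hvv_{(n)})$ converges along with $\hvv_{(n)}$, which is exactly what Lemma \ref{lem:candidate_stability} supplies.
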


Lemma \ref{lem:cont_map_thm}
 follows from 
classical and well studied properties 
of the Kolmogorov-Smirnov test, see for example 
\citet{lehmann2006testing, billingsley2013convergence}.

Algorithm \ref{algo:noisy} summarizes our method for estimating $W$ in the general case where $\sigma\geq0$ and $n<\infty$.
The following theorem establishes its consistency.
\begin{algorithm}[t]   
	\caption{Estimate $W$ when $\sigma>0$ and $n<\infty$}
	\label{algo:noisy}
	\hspace*{\algorithmicindent} \textbf{Input:} sample matrix $\samp\in\R^{m\times n}$ and $0<\lamths\ll 1$
	\begin{algorithmic}[1]
	\STATE estimate number of hidden units $d$ and noise level $\sigma^2$
	\STATE compute empirical moments 
	$\effo$, $\efso$ and $\efto$ and plugin moments $\efso_\sigma$ and $\efto_\sigma$ of \eqref{eq:moments_sigma}
	\STATE compute $\eKsig$ such that $\eKsig^\top \efso_\sigma \eKsig = I_d$	
	\STATE construct $\ewto_\sigma = \efto_{\sigma}(\eKsig,\eKsig,\eKsig)$ 
	\STATE compute the set $\eUsig$ of eigenpairs of $\ewto_\sigma$
	\STATE compute the candidate set $\eVsig$ in \eqref{eq:subsets_noisy}
	\STATE for each $\hvv\in\eVsig$ compute its KS score $\score_n(\hvv)$ in \eqref{eq:emo_score}
	\STATE select $\Vsubsig\subseteq \eVsig$ of $d$ vectors with smallest $\score_n(\hvv)$
	\STATE \textbf{return} the pseudo-inverse $\Wemp = \Vsubsig^\dagger$	
\end{algorithmic}\end{algorithm}

\begin{theorem}
Let $\fv_1,\dots,\fv_n$ be $n$ i.i.d.\ samples from model \eqref{eq:model}.
If conditions \textit{(I)-(II)} and the expected rigidity condition \eqref{eq:rigid_exp} hold, then the estimator $\Wemp$ computed by Algorithm \ref{algo:noisy} is consistent, 
achieving the parametric rate,
\beq
\Wemp = W + O_P(n^{-\frac{1}{2}}).
\eeq
\end{theorem}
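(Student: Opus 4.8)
The plan is to chain together the three results already established for the noisy setting: the $O_P(n^{-1/2})$ convergence of the plugin moments, the stability of the candidate set in Lemma \ref{lem:candidate_stability}, and the discriminative power of the Kolmogorov--Smirnov score in Lemma \ref{lem:cont_map_thm}. Two structural facts make the argument clean. First, the sample-splitting device ensures the filtering sample is independent of $\eVsig$, so throughout the filtering analysis I may condition on $\eVsig$ and treat each candidate as a fixed vector. Second, since $d$ is fixed, by \citet{cartwright2013number} the candidate set has cardinality at most $2^d-1=O(1)$; every statement that must hold simultaneously over all candidates therefore reduces to a union bound over a constant number of events, with no uniformity in $n$ required.

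I would first establish the rates in \eqref{eq:moments_rate}. The empirical moments $\effo,\efso,\efto$ are averages of i.i.d.\ terms with finite higher moments and are thus $O_P(n^{-1/2})$ close to their population values; the noise-corrected moments $\efso_\sigma,\efto_\sigma$ in \eqref{eq:moments_sigma} are affine in these estimates, and both whitening and the tensor-mode product defining $\ewto_\sigma$ are continuously differentiable where $\efso_\sigma$ is positive definite (guaranteed for large $n$ by condition \textit{(I)}). The delta method then gives $\nrm{\eKsig-\Ksig}_F=O_P(n^{-1/2})$ and $\nrm{\ewto_\sigma-\wto_\sigma}_F=O_P(n^{-1/2})$, so the precondition \eqref{eq:eig_stab_precond} of Lemma \ref{lem:candidate_stability} holds with $\delta=O_P(n^{-1/2})$. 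Invoking that lemma, with probability tending to one $\eVsig$ contains, for each $\vast\in\Vast$, a unique good candidate $\hvv(\vast)$ with $\nrm{\hvv(\vast)-\vast}\le c\delta=O_P(n^{-1/2})$ and eigenvalue $\lambda\ge 1-c\delta$; choosing the threshold $\lamths$ in \eqref{eq:subsets_noisy} of order $n^{-1/2}$ with a sufficiently large constant ensures $1-\lamths\le 1-c\delta$ with high probability, so all $d$ good candidates survive the eigenvalue cutoff. The separation bound \eqref{eq:separated} makes these $d$ good candidates distinct and forces every remaining candidate $\tvv$ to satisfy $\min_{\vast\in\Vast}\nrm{\tvv-\vast}=\Omega_P(1)$.

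It remains to verify that the KS-based selection recovers exactly the good candidates. Conditioning on $\eVsig$ and applying Lemma \ref{lem:cont_map_thm} to the independent filtering sample, each good candidate has score $\score_n(\hvv(\vast))=o_P(1)$, while each of the $O(1)$ remaining candidates, being $\Omega_P(1)$ away from $\Vast$, has score $\Omega_P(1)$ under the expected rigidity condition \eqref{eq:rigid_exp}. A union bound over the constant number of candidates shows that with probability tending to one a strict gap separates the $d$ small scores from all others, so Algorithm \ref{algo:noisy} returns precisely $\Vsubsig=\{\hvv(\vast):\vast\in\Vast\}$. Arranged as a matrix and matching the (unidentifiable) permutation, this gives $\Vsubsig=\Winv+O_P(n^{-1/2})$. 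Finally, since $A\mapsto A^\dagger$ is continuously differentiable near the full-rank matrix $\Winv$, one last application of the delta method yields $\Wemp=\Vsubsig^\dagger=(\Winv)^\dagger+O_P(n^{-1/2})$, and recalling $(\Winv)^\dagger=W$ from Proposition \ref{lem:pseudo} completes the proof.

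I expect the main obstacle to be the interface between the two stages, namely arguing that the filtering step isolates precisely the $d$ good candidates. The two delicate points are: (i) calibrating the eigenvalue threshold $\lamths$ so that it is large enough relative to the random quantity $c\delta$ to retain all $d$ good candidates, yet still admits no candidate that Lemma \ref{lem:candidate_stability} failed to control; and (ii) securing the ``far candidates have large score'' direction of Lemma \ref{lem:cont_map_thm}, which hinges on the expected rigidity condition \eqref{eq:rigid_exp} to prevent a distant candidate from accidentally inducing a two-component mixture of the special form \eqref{eq:gmm2}. Once the sample-splitting reduces the filtering analysis to finitely many fixed-vector comparisons, these are the only places where real probabilistic work is needed.
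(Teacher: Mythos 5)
Your proposal is correct and follows essentially the same route as the paper, which leaves this theorem's proof implicit as the assembly of \eqref{eq:moments_rate}, Lemma \ref{lem:candidate_stability}, the sample-splitting device, and Lemma \ref{lem:cont_map_thm}, followed by taking the pseudo-inverse. Your additions---calibrating $\lamths$ against the random $c\delta$, the union bound over the $O(2^d)=O(1)$ candidates, and the smoothness of $A\mapsto A^\dagger$ at the full-rank matrix $\Winv$---are exactly the details the paper glosses over, and they are handled correctly.
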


\paragraph{Runtime}
The runtime of Algorithm \ref{algo:noisy} is composed of three main parts.
First, $O(n m^3)$ operations are needed to compute all the relevant moments from the data and to construct the $d\times d\times d$ whitened tensor $\ewto_\sigma$.
The most time consuming task is computing the eigenpairs of $\ewto_\sigma$, which can be done by either the homotopy method or O--NCM.
Currently, no runtime guarantees are available for either of these methods. In practice, since there are $O(2^d)$ eigenpairs, these methods spend $O(2^d\cdot poly(d))$ operations in total.
Finally, since there are $O(2^d)$ candidates and each KS test takes $O(d n)$ operations \citep{gonzalez1977efficient}, the filtering procedure runtime is $O(d 2^d n)$.


\paragraph{Power-stability and orthogonal decomposition}

The exponential runtime of our algorithm stems from the fact that the
set $U_N$ of Newton-stable eigenpairs of $\wto_\sigma$ is 
typically exponentially large.
Indeed, the 
above algorithm
becomes intractable for large values of $d$.
However,
in some 
cases,
the set $\Uast$ of $d$ relevant eigenpairs 
has additional structure so that a smaller candidate set 
may be computed instead of $U_N$.
%
Specifically, consider the subset 
$U_P\subseteq U_N$
of \emph{power-stable} eigenpairs
of $\wto_\sigma$.
%
\begin{definition}
\label{def:power_stable_eig}
An eigenpair $(\uv,\lambda)$ 
is {power-stable} if its projected Jacobian $\hess(\uv)$ is either positive or negative definite.
\end{definition}

Typically, 
the number of power-stable eigenpairs is significantly smaller than the number of Newton-stable eigenpairs.%
\footnote{We currently do not know whether the number of power-stable eigenpairs of a generic tensor is polynomial or exponential in $d$.}
In addition, $U_P$ can be computed by 
the shifted higher-order power method 
\citep{kolda2011shifted,kolda2014adaptive}.

%
%
Similarly to Lemma \ref{lem:eig_stability}, one can show that 
$U_P$ is guaranteed to contain $\Uast$
whenever the following stronger version of 
condition \textit{(II)} holds:
for all $(\uv_i,\lambda_i)\in\Uast$, the matrix
\beqn
\label{eq:power_condition}
(W\wm\proj_{\uv_i})^\top
(2\hto(I,I,\bm e_i) - \hso)  (W\wm\proj_{\uv_i})
\eeqn
is either positive-definite or negative-definite.



As an  example, consider the case where 
$P_{\hv}$ has the support 
$\hv\in I_d$.
Then model \eqref{eq:model} corresponds to a GMM with $d$ spherical components with linearly independent means. 
%
%
In this case, both $\hso$ and $\hto$ are diagonal with 
$\hfo$
on their diagonal.
Thus, the matrices in \eqref{eq:power_condition} take the form
$
-\proj_{\bm e_i}^\top \diag(\hfo)\proj_{\bm e_i},
$
which by condition \textit{(I)} are all negative-definite.
%
%
In fact, in this case, $\wto_\sigma$ has an orthogonal decomposition and the $d$ orthogonal eigenpairs in $\Uast$ are the \emph{only} 
negative-definite power-stable eigenpairs of $\wto_\sigma$ \citep{anandkumar2014tensor}.
Similarly, when $P_{\hv}$ is a product distribution, the same orthogonal structure appears if the \emph{centered} moments of $\fv$ are used instead of $\fso$ and $\fto$.
As shown in \citet{anandkumar2014tensor}, the power method, accompanied with a deflation procedure, decompose an orthogonal tensor in polynomial time, thus implying
an efficient algorithm in these cases.

\section{Experiments}
\label{sec:experiments}
We demonstrate our method in two scenarios: (I) simulations from the exact binary model  \eqref{eq:model};
and (II) learning a common population genetic admixture model. 
Code to reproduce the simulation results can be found at \url{https://github.com/arJaffe/BinaryLatentVariables}.



\begin{figure}[t]
	\centering
	\includegraphics[width=0.45\linewidth]{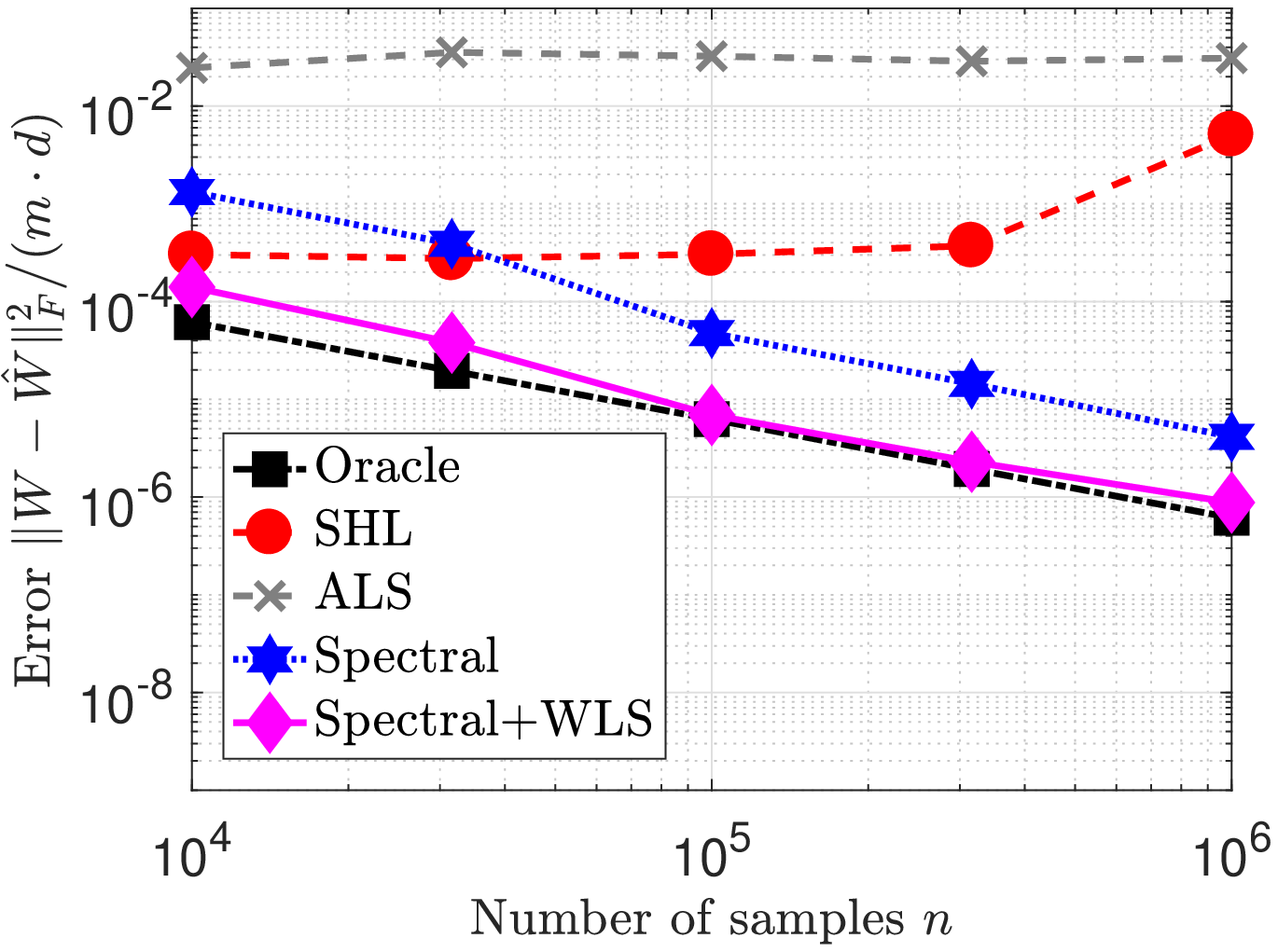}
	\includegraphics[width=0.45\linewidth]{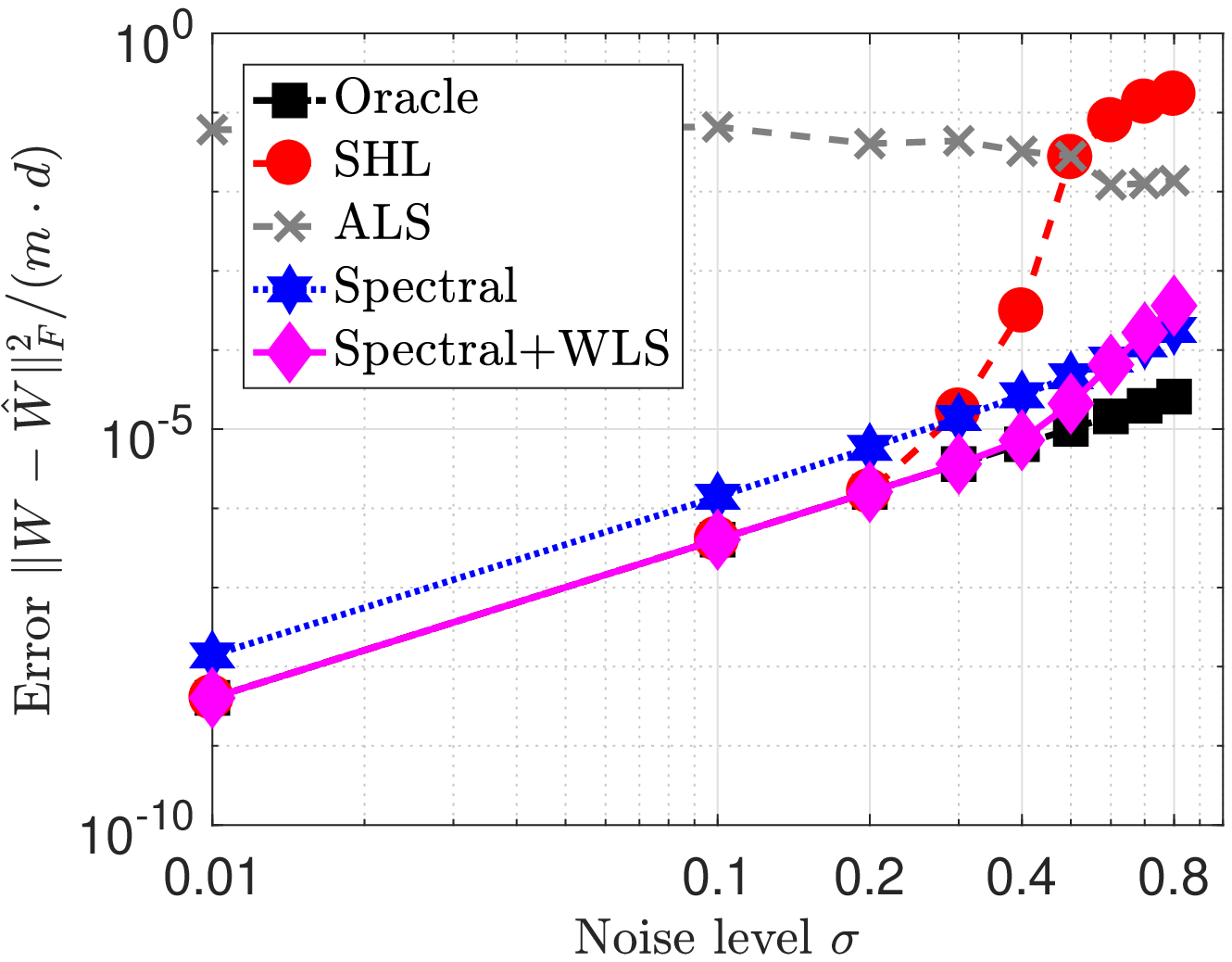}	
	\caption{Left panel: Error in $W$ vs.\ sample size $n$ with $\sigma=0.4$.
	Right panel: Error in $W$ vs.\ noise level $\sigma$ with $n=10^5$.
	}
	\label{fig:guassin_n_scan}
\end{figure}

\paragraph{Simulations} 
We generated $n$ samples from model \eqref{eq:model} with $d=6$ 
hidden units,
$m=30$ observable features, and Gaussian noise $\noisev\sim\mathcal N(\bm 0, I_m)$.
The $m$ columns of $W$ were drawn uniformly from the unit sphere $\sphere_{d-1}$.
Fixing a mean vector $\bm a\in\R^d$ and a covariance matrix $R\in\mathbb \R^{d\times d}$, 
each hidden vector $\hv$ was generated independently by first drawing $\bm r \sim \mathcal{N}(\bm a, R)$ and then taking its binary rounding.

Figure \ref{fig:guassin_n_scan} shows the error, in Frobenius norm, 
averaged over $50$ independent realizations of $X$ as a function of $n$ (upper panel) and $\sigma$ (lower panel) for five methods: (i) our spectral approach, detailed in Algorithm \ref{algo:noisy} (Spectral); (ii) Algorithm \ref{algo:noisy} followed by an additional single weighted least square step
detailed in Appendix \ref{sec:weighted_ls}
(Spectral+WLS); (iii) SHL, the matrix decomposition approach of \citet{slawski2013matrix}\footnote{Code taken from  \url{https://sites.google.com/site/slawskimartin/code}. For each realization, we aggregated over $50$ runs of SHL and chose the output $H,W$ that minimized $\|X - W^\top H\|_F$.};  (iv) ALS with a random starting point (see Appendix \ref{sec:als_method}); and (v) an oracle estimator that is given the exact matrix $H$ and computes $W$ via least squares. 

As one can see, as opposed to SHL, our method is consistent for $\sigma>0$ and achieves an error rate $O(n^{-\frac{1}{2}})$
corresponding to a slope of $-1$ in the left panel of Fig. \ref{fig:guassin_n_scan}.
In addition, as seen in the right panel of Fig. \ref{fig:guassin_n_scan}, at low level of noise our method is comparable to SHL, whereas at high level of noise it is far more accurate.
Finally, adding a weighted least square step reduces the error for low noise levels, but increases the error for high noise levels. 


               

\begin{figure}[t]
\hspace{3cm}
	\includegraphics[width=0.5\linewidth]{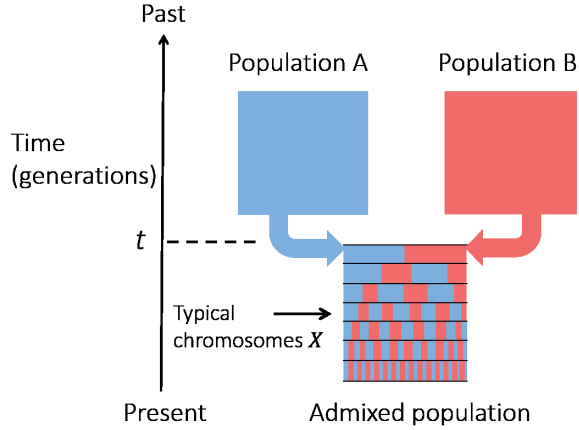}
	\caption{Illustration of the admixture model. 
	}
	\label{fig:admixture_model}
\end{figure}
\paragraph{Population genetic admixture} 	
We present an application of our method to a fundamental problem in population genetics, known as admixture, illustrated in Fig. \ref{fig:admixture_model}. 
Admixture
refers to the mixing of $d\geq 2$ ancestral populations that were long separated, e.g., due to geographical or cultural barriers  \citep{pritchard2000inference,alexander2009fast,li2008worldwide}.
The observed data $X$ is an $m \times n$ matrix where $m$ is the number of modern ``admixed'' individuals and $n$ is the number of relevant locations in their DNA,  known as SNPs.
Each SNP corresponds to two alleles and
different individuals may have different alleles.
Fixing a reference allele for each location, 
$X_{ij}$
takes values in $\{0,\frac{1}{2},1\}$ 
according to 
the number of reference alleles appearing in the genotype of individual $i\in[m]$ at locus $j\in[n]$.

Given the genotypes $X$, an important problem in population genetics
is to estimate the following two quantities.
The allele frequency matrix $H \in [0,1]^{d \times n}$ whose 
entry $H_{kj}$ is the frequency of the reference allele
at locus $j\in[n]$ in ancestral population $k\in[d]$;
and the admixture proportion matrix $W \in [0,1]^{d \times m}$ whose columns sum to $1$ and its 
entry $W_{ki}$
is the proportion of individual $i$'s genome that was inherited from population $k$.

A common model for $X$ in terms of $W$ and $H$ is to assume that the number of alleles $2X_{ij}\in\{0,1,2\}$ is the sum of two i.i.d.\ Bernoulli random variables with success probability 
$F_{ij} = \sum_{k = 1}^d W_{ki}H_{kj}$.
Namely,
\begin{equation*}
X_{ij} | H \sim \tfrac{1}{2}\cdot\text{Binomial}(2,F_{ij}).
\end{equation*}
Note that under this model 
\beqn
\label{eq:bernoulli}
\E[X|H] = F =W^\top H.
\eeqn
Although \eqref{eq:bernoulli} has similar form to model \eqref{eq:model}, there are two main differences;
the noise is not normally distributed
and
the matrix $H$ is non-binary.
Yet, 
model \eqref{eq:model} is expected to be
 a good approximation whenever various alleles are rare in some populations but  abundant
in others. Specifically, for ancestral populations that have been long separated, some alleles may become \textit{fixed} in one population (i.e., reach frequency of 1) while being totally absent in others.

\paragraph{Genetic simulations} 
We followed a standard simulation scheme appearing, for example, in \citet{xue2017time,gravel2012population,price2009sensitive}.
First, using SCRM \cite{scrm}, 
we simulated $d=3$ ancestral populations separated for $4000$ generations and generated the genomes of $40$ individuals for each. 
$H$ was then computed as the frequency of the reference alleles in each population.
Next, the columns of
 $W$ were sampled from a symmetric Dirichlet distribution with parameter $\alpha \geq 0$. 
Finally, the genomes of $m=50$ admixed individuals were generated as mosaics of genomic segments of individuals from the ancestral populations with proportions $W$. 
The mosaic nature of the admixed genomes is an important realistic detail, due to the \textit{linkage} (correlation) between SNPs \cite{xue2017time}. 
A detailed description is in Appendix \ref{sec:genetic_admixture}.

\begin{figure}[t]
	\centering
	\includegraphics[width=0.45\linewidth]{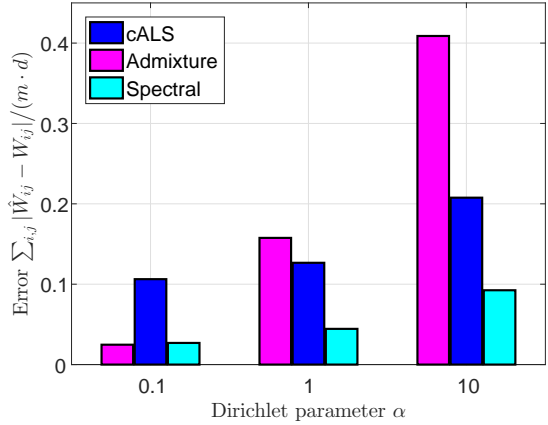}
	\caption{Average absolute error in $\hat W$ vs. Dirichlet parameter $\alpha$ for $d=3$ ancestral populations and $m=50$ admixed individuals.}
	\label{fig:results_admixed}
\end{figure}

We compare our algorithm to two methods. The first is Admixture \cite{alexander2009fast}, one of the most widely used algorithms in population genetics, which aims to
maximize the likelihood of $X$. A recent spectral approach is ALStructure \cite{cabreros2017nonparametric}, where 
an estimation of $\spn(W^\top)$ via \citet{chen2015consistent} is followed by constrained ALS iterations of $W$ and $H$. For our method, 
two modification are needed for Algorithm \ref{algo:noisy}.
First, since the distribution of $X_{ij}-\bm w_i^T \bm h_j$ is not Gaussian, the corrected moments $\hat M_\sigma,\hat \M_\sigma$ as calculated by \eqref{eq:moments_sigma} do not satisfy \eqref{eq:moments_relations_noise}. Instead, we implemented a matrix completion algorithm derived in \cite{jain2014learning} for a similar setup, see Appendix \ref{sec:improved_denoising} for more details. In addition,
the filtering process described in Section \ref{sec:method_noise} is no longer valid. However, as $d$ is relatively small, we were able to perform exhaustive search over all candidate subsets of size $d$ 
and choose the one that maximized the likelihood. 

Figure \ref{fig:results_admixed} compares the results of the $3$ methods for $\alpha=0.1,1,10$. The spectral method outperforms Admixture and ALStructure for $\alpha=1,10$ and performs similarly to Admixture for $\alpha=0.1$.

\paragraph{Acknowledgments} 
This research was funded in part by NIH Grant 1R01HG008383-01A1.

\bibliography{bib_linear_moment_model}
\bibliographystyle{plainnat}

\newpage

\appendix

\section{Proof of Proposition \ref{lem:pseudo}}
\label{sec:proof_rigid}
Uniqueness of the factorization readily follows from \eqref{eq:rigid_F} so we proceed 
to prove \eqref{eq:rigid_F}.
First note that $\spn(\samp)= \spn(W^\top) = \spn(\Winv)$.
Since $W$ is full rank, we have $W \Winv=I_d$.
Hence,
\[
(\Winv)^\top \samp = (W \Winv)^\top H = H \in \{0,1\}^{d\times n}.
\]
So any $\vast\in\Vast$ satisfies the binary constraint ${\vast}^\top \samp \in \{0,1\}^n$.
For the other direction, let $\vv\in\spn(\samp)\setminus\{\bm 0\}$ be such that $\vv^\top \samp \in \{0,1\}^n$.
Since $\vv^\top \samp = (W \vv)^\top H$, the rigidity condition \eqref{eq:rigid} implies $W \vv \in \{\bm e_i\}_{i=1}^d$.
Since $W$ is full rank and $\vv\in \spn(\Winv)$, $\vv$ must be a column of $\Winv$.

\section{Proof of Lemma \ref{lem:eig_to_W}}
\label{sec:proof_V_contains_Vast}
Since the vector $\hv$ is binary, its second and third order moments are related as follows.
For all $i,j \in [d]$,
\beqn
\label{eq:bin_constraints} 
\hto_{iij} = 
\hto_{iji} = \hto_{jii} = \E[h_i^2 h_j] = \E[h_i h_j] =
\hso_{ij}.
\eeqn
Since $W$ is full rank, $W \Winv = I_d$.
Hence, applying $\Winv$ multi-linearly on the moment equations in \eqref{eq:moments_relations} we obtain
\beq
\hso &=& (\Winv)^\top \fso {\Winv},
\\
\hto &=& \fto({\Winv},{\Winv},{\Winv}).
\eeq
Thus, the equality in \eqref{eq:bin_constraints}
is equivalent to
\begin{equation}\label{eq:const_2_3}
[\fto({\Winv},{\Winv},{\Winv})]_{iij}
=[(\Winv)^\top \fso {\Winv}]_{ij}.
\end{equation}
Let  $\wWinv\in\R^{d\times d}$ be the full rank matrix that satisfies $\Winv = \wm{\wWinv}$ where $\wm$ is the whitening matrix in \eqref{eq:M_def}.
Then,
\begin{equation}
\fto({\Winv},{\Winv},{\Winv}) = \fto(K \wWinv,K \wWinv,K \wWinv)
= \wto(\wWinv,\wWinv,\wWinv)
\end{equation}
where $\wto$ is the whitened tensor in \eqref{eq:tensor_wight}.
Similarly, by \eqref{eq:M_def},
\beq
(\Winv)^\top \fso {\Winv} = (\wWinv)^\top (K^T \fso K) (\wWinv) = (\wWinv)^\top \wWinv.
\eeq
Inserting these into \eqref{eq:const_2_3}, the matrix $\wWinv$ must satisfy
\beqn
\label{eq:T_tilde_X}
[\wto({\wWinv},{\wWinv},{\wWinv})]_{iij} = [(\wWinv)^\top (\wWinv)]_{ij},\,\,\, \forall i,j\in[d].
\eeqn
The following lemma, proved in Appendix \ref{sec:proof_eig_equations}, shows that Eq.\ \eqref{eq:T_tilde_X} is nothing but a tensor eigen-problem.
Specifically, the columns of $\wWinv$, up to scaling, 
are eigenvectors of $\wto$.

\begin{lemma}\label{lem:part_1}
Let $\wto\in\R^{d\times d\times d}$ be an arbitrary symmetric tensor.
Then, a matrix $Y=[\yv_1,\ldots,\yv_d] \in \R^{d \times d}$ of rank $d$ satisfies 
\eqref{eq:T_tilde_X} if and only if for all $k\in[d]$, $\yv_k = {\uv_k}/{\lambda_k}$, where $(\uv_k,\lambda_k)_{k=1}^d$ are $d$ eigenpairs of $\wto$  with linearly independent $\{\uv_k\}_{k=1}^d$.
\end{lemma}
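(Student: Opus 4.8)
\section{Proof of Lemma \ref{lem:part_1}}

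The plan is to show that the $d^2$ scalar equations in \eqref{eq:T_tilde_X} collapse, under the rank hypothesis on $Y$, into exactly $d$ independent vector equations, each of which is the normalized eigenpair fixed-point equation of Definition \ref{def:eig_def}. First I would unwind both sides of \eqref{eq:T_tilde_X} in terms of the columns $\yv_1,\ldots,\yv_d$ of $Y$. By the definition of the tensor-mode product, the $(i,i,j)$ entry of $\wto(Y,Y,Y)$ equals $\wto(\yv_i,\yv_i,\yv_j)$, while the $(i,j)$ entry of $Y^\top Y$ is the inner product $\langle \yv_i,\yv_j\rangle$. Thus \eqref{eq:T_tilde_X} reads $\wto(\yv_i,\yv_i,\yv_j)=\langle\yv_i,\yv_j\rangle$ for all $i,j\in[d]$. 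Using the symmetry of $\wto$, I would rewrite the left-hand side as $\langle\wto(I,\yv_i,\yv_i),\yv_j\rangle$, so that the system becomes
\[
\langle\, \wto(I,\yv_i,\yv_i) - \yv_i,\ \yv_j \,\rangle = 0, \qquad \forall\, i,j\in[d].
\]

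Next I would exploit the rank-$d$ hypothesis. For each fixed $i$, the displayed identity states that the single vector $\wto(I,\yv_i,\yv_i)-\yv_i$ is orthogonal to \emph{every} column $\yv_j$ of $Y$. Since $Y$ has rank $d$, its columns span $\R^d$, and a vector orthogonal to a spanning set must vanish. Hence \eqref{eq:T_tilde_X} is equivalent to the $d$ vector equations $\wto(I,\yv_i,\yv_i)=\yv_i$ for all $i\in[d]$. It then remains to recognize these as normalized eigenpair equations. Because $Y$ is full rank, each $\yv_i\neq\bm 0$; setting $\lambda_i = 1/\nrm{\yv_i}>0$ and $\uv_i = \yv_i/\nrm{\yv_i}$ (so $\yv_i=\uv_i/\lambda_i$ and $\nrm{\uv_i}=1$) and substituting, the equation $\wto(I,\yv_i,\yv_i)=\yv_i$ becomes $\wto(I,\uv_i,\uv_i)=\lambda_i\uv_i$, which is precisely the condition in Definition \ref{def:eig_def}, with the sign convention $\lambda_i\geq 0$ automatically met. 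The reverse substitution is identical, establishing the claimed one-to-one correspondence $\yv_k=\uv_k/\lambda_k$.

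Finally, for the linear-independence clause I would note that each $\uv_k$ is a nonzero scalar multiple of $\yv_k$, so $\{\uv_k\}_{k=1}^d$ is linearly independent if and only if $\{\yv_k\}_{k=1}^d$ is, i.e.\ if and only if $Y$ has rank $d$, matching the hypothesis on both sides of the equivalence. The main obstacle — indeed the only substantive step — is the collapse from the $d^2$ scalar constraints to the $d$ vector equations, which hinges entirely on the columns of $Y$ spanning $\R^d$; everything else is bookkeeping with the multilinear notation and the normalization $\uv_i=\yv_i/\nrm{\yv_i}$.
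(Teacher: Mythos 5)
Your proof is correct and takes essentially the same route as the paper: your spanning-set orthogonality argument is exactly the paper's right-multiplication of \eqref{eq:T_tilde_X} by $Y^{-1}$, both collapsing the $d^2$ scalar constraints to the $d$ vector equations $\wto(I,\yv_i,\yv_i)=\yv_i$, followed by the same normalization $\yv_i=\uv_i/\lambda_i$. The linear-independence bookkeeping and the reverse direction match the paper's (brief) treatment as well.
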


By Lemma \ref{lem:part_1}, the set of scaled eigenpairs $\{\yv =\uv/\lambda\}$ of $\wto$ is guaranteed to contain the $d$ columns of $\wWinv$.
Since $\Winv=K\wWinv$, the set $\{K \yv\}$ is guaranteed to contain $\Vast$.

To show that each $\yv = \uv/\lambda \in \wWinv$ has $\lambda\geq 1$,
note that 
the vector $K\yv$ is a column of $\Winv$, so $W K\yv = \bm e_i$ for some $i\in[d]$. Hence,
by the definition of the whitened tensor \eqref{eq:tensor_wight} and the moment equation \eqref{eq:moments_relations},
\beq
\wto(\yv, \yv, \yv)  
&=& \fto(K \yv, K \yv, K \yv)
= \hto(W K \yv, W K \yv, W K \yv)
\\
&=&\hto(\bm e_i, \bm e_i, \bm e_i) 
= \hto_{iii} = \E[h_i] \leq 1.
\eeq
On the other hand, since $(\uv,\lambda)$ is an eigenpair of $\wto$ with eigenvalue $\lambda = \wto(\uv, \uv, \uv)$,
\beq
\wto(\yv, \yv, \yv) = \frac{1}{\lambda^3}\wto(\uv, \uv, \uv) = 
\frac{1}{\lambda^2}.
\eeq
By convention, $\lambda\geq 0$. Hence, 
\beqn
\label{eq:lam_p}
\lambda = 1/\sqrt{\E[h_i]}\geq 1,
\eeqn
concluding the proof.

\section{Proof of Lemma \ref{lem:eig_stability}}
\label{sec:proof_eig_stability}

%

Let $(\uv,\lambda) \in U^*$ 
be an eigenpair of 
$\wto$
  such that
$\vast = \wm \uv/\lambda \in \Vast$.
To show Newton-stability we need to show that under conditions \textit{(I)-(II)} the
projected Jacobian matrix
$J_p(\uv) = \proj_{\uv}^\top \nabla \gv(\uv) \proj_{\uv}$
in \eqref{eq:proj_hessian} is full rank $d-1$.

The Jacobian matrix $\nabla\gv(\uv)$ is
\beqn
\nonumber
\nabla\gv(\uv)
&=& 2 \wto(I,I,\uv) - 3\uv \wto(I,\uv,\uv)^\top 
 - \wto(\uv,\uv,\uv) I_d
\\
\label{eq:jacobian_W}
&=&
2 \wto(I,I,\uv) - 3 \lambda \uv \uv^\top - \lambda I_d.
\eeqn
Since $\proj_{\uv}^\top \uv =\bm 0$, the second term in \eqref{eq:jacobian_W} does not contribute to $J_p(\uv)$.
For the first term in \eqref{eq:jacobian_W}, by \eqref{eq:tensor_wight} and \eqref{eq:moments_relations},
\beq
\wto(I,I,\uv) = \fto(K,K,K\uv) = \hto(WK,WK,WK\uv).
\eeq
Since $\vast = K\uv/\lambda$ 
is a column of $\Winv$,
$W \wm \uv = \lambda\bm e_i$ for some $i\in[d]$.
Thus,
\begin{equation}
\wto(I,I,\uv) = \lambda\hto(WK,WK,\bm e_i) 
= \lambda \wm^\top W^\top \hto(I,I,\bm e_i) W \wm.
\end{equation}
For the third term in \eqref{eq:jacobian_W}, by the definition of $\wm$ in
\eqref{eq:M_def},
\beq
I_d = \wm^\top\fso \wm = \wm^\top W^\top\hso W\wm.
\eeq
Putting the last two equalities in \eqref{eq:jacobian_W} and applying the projection $\proj_{\uv}$ we obtain
\beq
J_p(\uv) = 
\proj_{\uv}^\top \nabla \gv(\uv) \proj_{\uv}
= \lambda \proj_{\uv}^\top\wm^\top W^\top (2\hto(I,I,\bm e_i) - \hso) W \wm\proj_{\uv}.
\eeq
Since $\lambda\geq 1$ and $W$ and $\wm$ are full rank,
condition \textit{(II)} implies that $J_p(\uv)$ is full rank as well.
Thus, $(\uv,\lambda)$ is a Newton-stable eigenpair of $\wto$.

\section{Proof of Lemma \ref{lem:candidate_stability}}
\label{sec:implicit}
Lemma \ref{lem:candidate_stability} follows from the following lemma
which establishes
the stability of Newton-stable eigenpairs of a tensor $\wto$
to small perturbations $\twto= \wto +\Delta\wto$.
 
\begin{lemma} 
\label{lem:pert_bounds}
Let $(\uv,\lambda)$ be a Newton-stable eigenpair of $\wto$
with $\lambda \geq 1$.
There are $c_1,c_2,\eps_0 >0$ such that
for all sufficiently small $\eps>0$ the following holds.
For any $\twto$ such that
$\nrm{\twto - \wto}_F \leq \eps$
there exists a unique eigenpair $(\tuv,\tlambda)$ of $\twto$ such that
\beqn
\label{eq:pert_bounds}
\nrm{\uv - \tilde \uv} \leq c_1\eps
\quad\text{and}\quad
|\tilde \lambda - \lambda| \leq c_2\eps.
\eeqn
In addition, $(\tuv,\tlambda)$ is Newton-stable and any other eigenvector $\tvv$ of $\twto$ satisfies 
$
\nrm{\tvv-\uv}\geq \eps_0.
$
\end{lemma}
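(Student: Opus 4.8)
The plan is to prove Lemma \ref{lem:pert_bounds} as an application of the implicit function theorem \citep{hubbard2015vector} to the eigenpair equation recast through the polynomial map $\gv$ from \eqref{eq:g_function}. Recall that $(\uv,\lambda)$ is an eigenpair of $\wto$ iff $\nrm{\uv}=1$ and $\gv_{\wto}(\uv)=\bm 0$, where I write $\gv_{\T}(\uv)=\T(I,\uv,\uv)-\T(\uv,\uv,\uv)\,\uv$ to make the dependence on the tensor explicit. A direct computation shows $\uv^\top\gv_{\T}(\uv)=\T(\uv,\uv,\uv)(1-\nrm{\uv}^2)$, so on the sphere $\gv_{\T}$ takes values in the subspace orthogonal to $\uv$; eigenvectors are thus the zeros of $\gv_{\T}$ restricted to $\sphere_{d-1}$, and the Newton-stability hypothesis is precisely the statement that the linearization of this restricted map, namely $\hess(\uv)=\proj_{\uv}^\top\nabla\gv_{\wto}(\uv)\proj_{\uv}$, is invertible.

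To turn this into a genuine root-finding problem on an open subset of $\R^{d-1}$, I would parametrize the sphere locally near $\uv$ by $\uv(\bm t)=(\uv+\proj_{\uv}\bm t)/\nrm{\uv+\proj_{\uv}\bm t}$ for $\bm t\in\R^{d-1}$ in a fixed ball around the origin, and define $\psi(\bm t,\T)=\proj_{\uv}^\top\gv_{\T}(\uv(\bm t))$. Then $\psi(\bm 0,\wto)=\bm 0$, and since the columns of $\proj_{\uv}$ are orthogonal to $\uv$ the normalization contributes nothing to first order, giving $\partial_{\bm t}\uv(\bm 0)=\proj_{\uv}$ and hence $\partial_{\bm t}\psi(\bm 0,\wto)=\proj_{\uv}^\top\nabla\gv_{\wto}(\uv)\proj_{\uv}=\hess(\uv)$. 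This $(d-1)\times(d-1)$ matrix is invertible by Newton-stability, so the implicit function theorem applies: there are fixed neighborhoods of $\bm 0$ and of $\wto$, and an analytic map $\bm t(\T)$ with $\bm t(\wto)=\bm 0$, such that for every $\T$ in the neighborhood $\bm t(\T)$ is the \emph{unique} zero of $\psi(\cdot,\T)$ in that ball. Because $\psi$ is a polynomial, its derivatives are uniformly bounded on compact sets, so the IFT yields a Lipschitz estimate $\nrm{\bm t(\twto)}\le c\,\nrm{\twto-\wto}_F\le c\eps$; translating back through the chart gives $\tuv=\uv(\bm t(\twto))$ with $\nrm{\tuv-\uv}\le c_1\eps$, and since $\tlambda=\twto(\tuv,\tuv,\tuv)$ is a smooth function of $(\tuv,\twto)$, also $|\tlambda-\lambda|\le c_2\eps$.

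It remains to verify the two closing assertions. For Newton-stability of $(\tuv,\tlambda)$, note that the projected Jacobian $\proj_{\uv'}^\top\nabla\gv_{\T}(\uv')\proj_{\uv'}$ depends continuously on $(\uv',\T)$, and its rank is independent of the chosen orthonormal basis $\proj_{\uv'}$; since it has full rank $d-1$ at $(\uv,\wto)$, the same holds on a neighborhood, so for $\eps$ small enough $(\tuv,\tlambda)$ is Newton-stable. For the separation, take $\eps_0$ to be the radius of the fixed chart ball supplied by the IFT: the uniqueness clause guarantees that $\tuv$ is the \emph{only} eigenvector of $\twto$ within that ball, so any other eigenvector $\tvv$ satisfies $\nrm{\tvv-\uv}\ge\eps_0$. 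I note that the hypothesis $\lambda\ge 1$ is not used in the core argument; it enters only through $\tlambda\ge\lambda-c_2\eps\ge 1-c_2\eps$, which is what later connects the perturbed eigenpair to the threshold in the candidate set \eqref{eq:subsets_noisy}.

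The main obstacle I anticipate is the bookkeeping around the sphere constraint: one must confirm that the tangent-space linearization of the charted map really coincides with $\hess(\uv)$, so that Newton-stability is exactly the invertibility hypothesis the IFT requires, and that the radius $\eps_0$ produced by the IFT is a genuine constant independent of $\eps$ rather than one that degenerates as $\twto\to\wto$. Both points rest on the standard uniform form of the implicit function theorem, but they are precisely where a careless argument could silently forfeit the uniqueness and separation conclusions; everything else is routine once $\hess(\uv)$ is identified as the relevant invertible derivative.
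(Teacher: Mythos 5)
Your proof is correct, and it uses the same engine as the paper---a quantitative implicit function theorem with Newton-stability supplying the invertible derivative---but the implementation is genuinely different. You work intrinsically on the sphere: the chart $\uv(\bm t)=(\uv+\proj_{\uv}\bm t)/\nrm{\uv+\proj_{\uv}\bm t}$ reduces the eigenpair equation to a $(d-1)$-dimensional root problem whose linearization at the origin is exactly $\hess(\uv)$, so Newton-stability is \emph{literally} the IFT hypothesis. The paper instead works in the ambient space: it defines $\bm Q\colvec{\vv}{\tv}=\T(I,\vv,\vv)-\T(\vv,\vv,\vv)\vv$ jointly in $\vv$ and the $s=d^3$ tensor entries and applies the IFT to the block system $D=\bigl(\begin{smallmatrix} A & B \\ 0 & I_s \end{smallmatrix}\bigr)$ with $A=\nabla_{\vv}\bm Q$ the \emph{full} $d\times d$ Jacobian. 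This forces the paper to argue that $A(\uv,\wv)$ is invertible, which is precisely where $\lambda\geq 1$ enters: at an eigenpair $\nabla\gv(\uv)$ is symmetric with $\nabla\gv(\uv)\,\uv=-2\lambda\uv$, hence block-diagonalizes as $\diag(-2\lambda,\hess(\uv))$ and is invertible only when $\lambda\neq 0$; moreover, zeros of $\bm Q$ are automatically unit-norm only when the associated eigenvalue is nonzero (the paper's footnote). Your chart enforces $\nrm{\uv(\bm t)}=1$ exactly, so both issues vanish, and your observation that $\lambda\geq1$ is unused in the core argument is accurate for your route, though not for the paper's. What the paper's route buys in exchange is explicit constants: tracking the smallest singular value $\g_D$ of $D$ and the Lipschitz constant $L_D$ of $\nabla\bm Q$, it obtains $c_1=2/\g_D$ and the bound $\g_D\geq \g_A/\sqrt{\g_A^2+d}$ via $\nrm{B(\uv,\wv)}_F^2=d-1$. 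One step you should make explicit (it is exactly the bookkeeping you flagged): zeros of $\psi(\cdot,\T)$ in the chart coincide with eigenvectors because $\gv_{\T}(\vv)\perp\vv$ on the sphere while $\ker(\proj_{\uv}^\top)=\spn(\{\uv\})$ intersects $\vv^\perp$ trivially whenever $\uv^\top\vv>0$, which holds throughout a small chart ball; with that one line supplied, your uniqueness and separation conclusions transfer exactly as in the paper.
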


\begin{proof}[Proof of Lemma \ref{lem:pert_bounds}.]
For a tensor $\T\in\R^{d\times d\times d}$ let $\tv\in \R^{s}$ be the vector of $s=d^3$ entries $\{\T_{ijk}\}$.
Define the function $\bm Q:\R^{d+s}\to\R^d$ by
\beq
\bm Q\colvec{\vv}{\tv}  
= \T(I,\vv,\vv) - \T(\vv,\vv,\vv)\cdot\vv.
\eeq
%
Note that for any $\tv\in\R^{s}$ and $(\vv,\beta)\in\R^d\times\R$ with $\vv\neq\bm 0$ and $\beta\neq 0$, we have that
$\bm Q\colvec{\vv}{\tv} = \bm 0$
if and only if $(\vv,\beta)$ is an eigenpair of $\tv$ with eigenvalue $\beta=\T({\vv},{\vv},{\vv})$.%
\footnote{This does not precisely hold when $\beta=0$ since $\bm Q\colvec{\vv}{\tv} = \bm 0$ does not imply $\nrm{\bm v}=1$ in this case, but only that $\vv$ is proportional to an eigenvector.}
Denote the gradients of $\bm Q$ with respect to $\vv$ and $\tv$ by
\beq
A(\vv,\tv) &=& \nabla_{\vv} \bm Q\colvec{\vv}{\tv}\in\R^{d\times d},
\\
B(\vv,\tv) &=& \nabla_{\tv} \bm Q\colvec{\vv}{\tv}\in\R^{d\times s}.
\eeq
Let $\wv\in\R^s$ be the vectorization of $\wto$ and let $(\uv,\lambda)\in\sphere_{d-1}\times\R_+$ be a Newton-stable eigenpair of $\wv$ with $\lambda \geq 1$.
Since $\uv$ is Newton-stable and $\lambda > 0$, $ A(\uv,\wv)$ is invertible.
In addition, the following $(d+s) \times (d+s)$ matrix
is invertible,
\beq
D(\uv,\wv) =
\left(
\begin{matrix}
A(\uv,\wv)
& B(\uv,\wv)
\\
0 & I_s
\end{matrix}
\right).
\eeq
Let $\g_D = 1/\nrm{D(\uv,\wv)^{-1}}>0$ be the smallest singular value of $D(\uv,\wv)$ 
and let $L_D<\infty$ be the Lipschitz constant of 
$\nabla \bm Q(\vv,\tv)= [ A(\vv,\tv),  B(\vv,\tv)] \in \R^{d\times (d+s)}$
in a small neighborhood 
of $(\uv,\wv)$,
namely, $\forall (\vv,\tv),(\tvv, \ttv)$ in the neighborhood,
	\beq
	\nrm{\nabla\bm Q(\vv,\tv) - \nabla\bm Q(\tvv, \ttv)} \leq 
	L_D\nrm{({\vv},{\tv}) - (\tvv, \ttv)}.
\eeq
Let $B_\eps(\wv)\subset\R^s$ be the ball of radius $\eps$ centered at $\wv$.
Then by the implicit function theorem \cite{hubbard2015vector}, 
for any $\eps\leq \eps_1 := \g_D^2/(2L_{D} )$, 
there exists a \emph{unique} continuously differentiable mapping
$\tuv : B_\eps(\wv) \to B_{2\eps/\g_D}(\uv)$
such that
$Q\colvec{\tuv(\twv)}{\twv}=\bm 0$ for all $\twv\in B_\eps(\wv)$.
In other words, for any $\twv$ such that $\nrm{\twv-\wv}\leq \eps$, there exist a unique vector $\tuv$ in all $B_{2\eps/\g_D}(\uv)$ that is an eigenpair of $\twv$.
Equivalently, for $\twto$ such that $\nrm{\twto-\wto}_F\leq \eps$,
there exists a unique eigenvector $\tuv$ of $\twto$ 
such that 
\beqn
\label{eq:eigvec_deviation}
\nrm{\tuv-\uv} \leq {2}\eps/{\g_D} := c_1 \eps.
\eeqn
The bound on $|\tlambda-\lambda|$ readily follows from \eqref{eq:eigvec_deviation}.
Indeed, let $q:\R^{d+s}\to\R$ be $q\colvec{{\bm v}}{{\bm t}} = \T(\bm v,\bm v,\bm v)$ and let $L_\lambda$ be the Lipschitz constant of $q$ in the neighborhood of $(\uv,\wv)$.
Then, 
\beq
|\tlambda - \lambda|
&=& |q\colvec{\tuv}{\twv} - q\colvec{\uv}{\wv}|
\\
&\leq& L_\lambda \sqrt{\nrm{\tuv - \uv}^2+\nrm{\twv-\wv}^2}
\\
&\leq& L_\lambda\sqrt{\frac{2}{\g_D} + 1}\cdot\eps := c_2\eps.
\eeq

As for the Newton-stability of $\tuv$,
let $r:\R^{d+s}\to\R_+$ be
$r(\vv,\tv)=1/\nrm{A(\vv,\tv)^{-1}}$,
the minimal singular value of $A(\vv,\tv)$.
Since $(\uv,\lambda)$ is a Newton-stable eigenpair of $\wv$,
$\exists\g_A>0$ such that $r(\uv,\wv)\geq \g_A$.
Let $L_\g$ be the Lipschitz constant of $r(\vv,\tv)$ in the neighborhood \citep{golub2012matrix}.
Then, for $\eps \leq \eps_2 := \g/(2L_\g)$,
we have $r(\tuv,\twv)\geq \g_A/2>0$, so $(\tuv,\tlambda)$ is a Newton-stable eigenpair of $\twv$.

Finally, we show that any other eigenvector $\tvv$ of $\twto$ is
apart from $\uv$.
Since $\tuv$ is Newton-stable, there exists $\eps_0>0$ such that
$\nrm{\tvv-\tuv}\geq 2\eps_0$ for any other eigenvector $\tvv$.
Hence, for $\eps \leq \eps_0$, 
\beq
\nrm{\tvv-\uv}\geq \big|\nrm{\tvv-\tuv} - \nrm{\tuv-\uv}\big| \geq \nrm{\tvv-\tuv} - \eps \geq \eps_0.
\eeq
Taking $\eps \leq \min\{\eps_0,\eps_1,\eps_2\}$ and $c_1,c_2,\eps_0$ as above
concludes the proof of the lemma.

Lastly, for completeness, we show that ${\g_D}\geq \frac{\g_A}{\sqrt{\g_A^2 + d}}$.
\beqn
\nonumber
\g_D^{-1}&=&\nrm{D(\uv,\wv)^{-1}}
\\
\nonumber
&\leq& \sqrt{\nrm{A(\uv,\wv)^{-1}}^2( 1 + \nrm{B(\uv,\wv)}^2) + \nrm{I_s}^2}
\\
\label{eq:C_inv}
&\leq& \sqrt{1 + \frac{1 + \nrm{B(\vv,\wv)}^2}{\g_A^2}}.
\eeqn
To bound $\nrm{B(\uv,\wv)}$, note that $\bm Q(\uv,\wv)$ is linear in $\wv$ and its $i$-th entry is given by
\beq
[\bm Q\colvec{\uv}{\wv}]_i = \sum_{k,l} w_{ikl}u_k u_l - (\sum_{j,k,l} w_{jkl} u_j u_k u_l) u_i.
\eeq
Thus, the ${d \times m}$ matrix $B(\uv,\wv)$ has entries 
\begin{equation}
[B(\uv,\wv)]_{i, (jkl)} = [\nabla_{\wv} \bm Q\colvec{\uv}{\wv}]_{i, (jkl)} 
= (\delta_{ij} - u_i u_j)u_k u_l,
\end{equation}
which is independent of $\wv$.
Recalling that $\nrm{\uv}=1$,
\beq
\nrm{B(\uv)}^2 &\leq& \nrm{B(\uv)}_F^2
= \sum_{i,j,k,l=1}^d (\delta_{ij} - u_i u_j)^2 u_k^2 u_l^2
\\
&=& \sum_{i,j=1}^d (\delta_{ij}^2 - 2 \delta_{ij} u_i u_j + u_i^2 u_j^2)=d - 1.
\eeq
Putting this bound in \eqref{eq:C_inv}, we obtain
${\g_D}\geq \frac{\g_A}{\sqrt{\g_A^2 + d}}$.
\end{proof}

\section{Proof of Lemma \ref{lem:test_vast}}
\label{sec:proof_cumulative_eq}
Let $\vast\in\Vast$.
Then $\exists i\in[d]$ such that ${\vast}^\top W^\top \hv = h_i \in\{0,1\}$.
Hence, by \eqref{eq:mix_all}, the c.d.f.\ $F_{\vast}$ of ${\vast}^\top \fv$ corresponds to the two component GMM 
\begin{equation*}
(1-p_i)\cdot\mathcal{N}(0, \sigma^2\nrm{\vast}^2) 
 + p_i \cdot \mathcal{N}(1, \sigma^2 \nrm{\vast}^2).
\end{equation*}
By Lemma \ref{lem:eig_to_W} we have $p_i = 1/\lambda(\vast)^2$.
Thus, $F_{\vast}=G_{\vast}$.

For the other direction, let $\vv \in \Vsig\setminus\Vast$. 
Since $W$ is full rank,
the $d$-dimensional vector $\uv^\top = \vv^\top W^\top \notin \{\bm e_i^\top\}_{i=1}^d$.
Moreover,
by 
Eq.\ \eqref{eq:separated} of 
Lemma \ref{lem:candidate_stability}, 
\beq
\inf_{\vv\in \Vsig\setminus \Vast} \min_{\vast\in\Vast}\nrm{\vv - \vast} \geq \delta_1 > 0.
\eeq
Hence, there exists $\eps_0>0$ such that 
\beq
\min_{i\in[d]} \nrm{\uv-\bm e_i} \geq \eps_0.
\eeq
So by the expected rigidity condition \eqref{eq:rigid_exp}, there exists $\eta_0>0$ such that $r(\uv)\geq \eta_0$.
It follows that $F_{\vv}$ has a component with mean that is bounded away from both $0$ and $1$ and thus $F_{\vv}\neq G_{\vv}$.
In particular, there exists $\eta_1>0$ such that
\beqn
\label{eq:KS_upper_bound}
\sup_{t\in\R}|F_{\vv}(t)- G_{\vv}(t)|\geq\eta_1.
\eeqn

\section{Proof of Lemma \ref{lem:cont_map_thm}}
\label{sec:proof_cont_map_thm}
Recall that our sample of size $2n$ was split into two separate parts each of size $n$.
The first $n$ samples were used to estimate the tensor eigenvectors, and the last $n$ samples to estimate the empirical cdf's of their projections onto the eigenvectors. 

For any $\hvv$ that is close to a vector $\vv$,
we bound 
$
\score_n(\hvv) = \nrm{\eCom_{\hvv} - G_{\hvv}}_\infty
$
by the triangle inequality,
\begin{equation}
\label{eq:KS_decompose}
\nrm{\eCom_{\hvv} - G_{\hvv}}_\infty
\leq 
\nrm{\eCom_{\hvv} - F_{\hvv}}_\infty +
 \nrm{F_{\hvv} - F_{\vv}}_\infty +
 \nrm{F_{\vv} - G_{\vv}}_\infty +
 \nrm{G_{\vv} - G_{\hvv}}_\infty.
\end{equation}
We now consider each of the four terms separately, starting with the first one.
Since $\sigma>0$, the cdf $F_{\hvv}:\R\to[0,1]$ is continuous and the distribution of
$\nrm{\eCom_{\hvv}-F_{\hvv}}_\infty$ is independent of $\hvv$.
Then, by the Dvoretzky-Kiefer-Wolfowitz inequality, $\nrm{\eCom_{\hvv}-F_{\hvv}}_\infty$ is w.h.p.\ of order $O(1/\sqrt{n})$ for any $\hvv$, and in particular tends to zero as $n\to0$. 

As for the second term, write $\hvv=\vv+\bm \eta$.
Then,
\beq
\hvv^\top\xv=\vv^\top\xv+ \bm\eta^\top\xv.
\eeq
Recall that $\xv=W^\top\hv+\sigma\noisev$. Hence, $|\bm\eta^\top\xv|\leq\nrm{W}_2\sqrt{d}\nrm{\bm\eta}+\sigma|\bm\eta^\top\noisev|$.
The term $\bm\eta^\top\noisev$ is simply a zero mean Gaussian random variable with standard deviation $\sigma\nrm{\bm\eta}$.
So, there exists $K_n>\sqrt{d}\nrm{W}_2 +\sigma n^{1/3}$ such that with probability tending to one as $n\to\infty$, for all $n$ samples $\xv_j\in\samp$, $|\bm\eta^\top\xv_j|\leq K_n\nrm{\bm\eta}$.
Thus, $|\hvv^\top\xv-\vv^\top\xv|$ can be bounded by $K_n\nrm{\hvv-\vv}$.
This, in turn, implies that
\beq
\nrm{F_{\hvv} - F_{\vv}}_\infty \leq L K_n \nrm{\hvv-\vv},
\eeq
where $L=\max_{t} F'_{\vv}(t)$, which is finite for any $\sigma>0$.
Now, suppose the sequence $\hvv_{(n)}$ converges to some $\vv$ at rate $O_P(1/\sqrt{n})$.
Since $K_n$ grows much more slowly with $n$, this term tends to zero.

Let us next consider the fourth term, and leave the third term to the end.
Here note that $G_{\vv}$ is continuous in its parameter $\vv$.
So if the sequence $\hvv_{(n)}$ converges to some $\vv$, then this term tends to zero.

Finally, consider the third term.
If the limiting vector $\vv$ belongs to the correct set, namely $\vast\in\Vast$, then $F_{\vv}=G_{\vv}$, and thus overall $\nrm{\eCom_{\hvv} - G_{\hvv}}_\infty$ tends to zero as required.

In contrast, if $\hvv$ converges to a vector $\vv\notin \Vast$, then instead of Eq.\ \eqref{eq:KS_decompose} we invoke the following inequality:
\begin{equation*}
\nrm{\eCom_{\hvv} - G_{\hvv}}_\infty
\geq 
\nrm{F_{\vv} - G_{\vv}}_\infty -
 \nrm{F_{\vv} - F_{\hvv}}_\infty -
\nrm{F_{\hvv} - \eCom_{\hvv}}_\infty -
\nrm{G_{\hvv} - G_{\vv}}_\infty .
\end{equation*}
Here $\nrm{F_{\vv} - G_{\vv}}_\infty$ is strictly larger than zero whereas the three other remaining terms tend to zero as $n\to\infty$ as above.

\section{Proof of Lemma \ref{lem:part_1}}
\label{sec:proof_eig_equations}
Multiplying \eqref{eq:T_tilde_X} from the right by the full rank matrix $Y^{-1}$ we obtain the equations
\beq
[\wto(Y,Y,I)]_{iij} = [Y^\top]_{ij}
,\quad \forall i,j\in[d].
\eeq
Note that for all $i\in[d]$, 
$$[\wto(Y,Y,I)]_{iij} = [\wto(\yv_i,\yv_i,I)]_j.$$
Since $\wto$ is symmetric, we thus have
\beq
\wto(I,\yv_i,\yv_i) = \yv_i
,\quad \forall i\in[d].
\eeq
Writing $\yv_i = \uv_i/\lambda_i$ we obtain the eigenpair equation
\beq
\wto(I,\uv_i,\uv_i) = \lambda_i\uv_i
,\quad \forall i\in[d].
\eeq
The other direction readily follows from the definition of eigenpairs.

\section{Matrix and tensor denoising}\label{sec:improved_denoising}

In Algorithm \ref{algo:noisy}, we modify the diagonal elements of $M,\M$ by \eqref{eq:moments_sigma}. This modification is suited for additive Gaussian noise, but is not applicable for the case where $X= \text{binomial}(2,W^TH)$. Instead, we implemented a method derived in \cite{jain2014learning} for a similar setup. 

First, we treat the diagonal elements of $M_\sigma$ as missing data, and complete them with the following iterative steps.
(i) compute the first $d$ eigenpairs $\{\vv_i,\lambda_i\}$ of $R^{(k)}$; and (ii) update the diagonal elements by $R^{(k+1)}_{jj} = (\sum_i \lambda_i \vv_i \vv_i^\top)_{jj}$. 

Next, instead of computing $\M_\sigma$ via  \eqref{eq:moments_sigma} and then $\wto_\sigma$ via \eqref{eq:white_tensor_noisy}, we compute $\wto_\sigma$ directly by solving 
the following system of linear equations.
Let $K^\dagger$ be the pseudo-inverse matrix of $K$, and $P_\Omega(\T)$ denote a masking operation
over the tensor $\T$ such that,
\[
P_\Omega(\T) = \begin{cases}
\T_{ijk} & i \neq j \neq k \\
0        & \text{o.w} 
\end{cases} 
\]
We estimate $\wto$ by the following minimization problem,
\[
\hat \wto = \argmin_\wto \| P_\Omega \big( \wto(K^\dagger,K^\dagger,K^\dagger)\big) - P_\Omega(\fto)\|_F^2
\]
This method depends only on the off-diagonal elements of $M$ and $\M$ and hence is applicable whenever $\mathbb E[X|H] = W^T H$ and the noise has bounded variance. 

\section{Adding a weighted least square step to the spectral method}\label{sec:weighted_ls}
In section \ref{sec:experiments}, we compare the results of algorithm \ref{algo:noisy} with and without an additional single weighted least square step.
Given an estimate $\hat W$, for each observed instance $\bm x_j$ we calculate the conditional likelihood $\mathcal L(\bm x_j|\bm h)$ for the $2^d$ possible binary vectors $\bm h \in \{0,1\}^d$,
\[
\mathcal L(\bm x_j|\bm h) = \frac{1}{\sqrt{2\pi\sigma^2}}\exp\Big(-\|\bm x_j-\hat W^T \bm h\|^2\Big/(2\sigma^2)\Big)
\]
For each instance $\bm x_j$, we keep the top $K=6$ vectors $\bm h_{1j},\ldots,\bm h_{Kj}$ with the highest likelihood. Let $\Pi \in [0,1]^{K \times n}$ be a weight matrix such that 
$\Pi_{kj}$ is proportional to $\mathcal L(\bm x_j|\bm h_{kj})$, and $\sum_k \Pi_{kj} =1$ for all $j$. The new estimate $\hat W_{\text{wls}}$ is the minimizer of the weighted least square problem,
\[
\hat W_{\text{wls}} = \argmin_W \sum_{j = 1}^n \sum_{k=1}^K \Pi_{kj} \|\bm x_j-W^T \bm h_{kj}\|^2.
\]

\section{Alternating least squares for $W$ and $H$}\label{sec:als_method}
In section \ref{sec:experiments}, we compare the results of the spectral approach to the following ALS iterations, with a random starting point.
\begin{align}
W^{(k)} & = \argmin_{W \in \mathbb R^{d \times m}} \|X-W^T H^{(k-1)}\|_F^2 \notag \\
\hat H^{(k)} & = \argmin_{H \in \mathbb R^{d \times n}} \|X-(W^{(k)})^T H\|_F^2 \notag \\
H^{(k)} & = \argmin_{H \in \{0,1\}^{d \times n}}\|H-\hat H^{(k)}\|_F^2 \notag, \notag 
\end{align}

\section{Genetic admixture simulations}
\label{sec:genetic_admixture}
The simulated admixture data was generated via the following steps:
\begin{enumerate}
\item We used SCRM \citep{scrm} to simulate a split between $d=3$ ancestral populations, with separation time of $4000$ generations. The simulator generated $40$ chromosomes of length $250 \cdot 10^6$ for each of the three populations.
The simulation parameters were determined as $N_0=10^4$ effective population size, $10^{-8}$ mutation rate (per base pair per generation), and $10^{-8}$ recombination rate (per base pair per generation).

\item We sampled the proportion matrix $W$ from a Dirichlet distribution with parameter $\alpha$. 
\item 
Two chromosomes of length $250 \cdot 10^6$ were created for each of the $m = 50$ admixed individuals with the following steps: (i) An ancestral population was sampled according to W, say, population $h_A$. (ii) One of the $40$ chromosomes was sampled from $h_A$, say $h_A(k)$ (iii) A block length $l$ was sampled from an exponential distribution with rate $20$ per Morgan corresponding admixture event happening $20$ generations ago (in our case, 1 Morgan was $10^8$ base pairs). (iv) A block of length $l$ was copied from chromosome $h_A(k)$ to the corresponding locations in the new admixed chromosome. We repeated steps (i)-(iv) until completion of the chromosome.

\end{enumerate}

\end{document}